\documentclass[11pt]{article}

\usepackage[preprint]{acl}

\usepackage{times}
\usepackage{latexsym}
\usepackage[T1]{fontenc}

\usepackage[utf8]{inputenc}

\usepackage{microtype}

\usepackage{inconsolata}

\usepackage{graphicx}
\usepackage{booktabs}
\usepackage{listings}
\usepackage{xcolor}
\usepackage{tcolorbox}
\usepackage{ragged2e}
\usepackage{amsmath, amsthm, amssymb, amsfonts}
\usepackage{tabularx}
\usepackage{hyperref}
\usepackage{cleveref}

\newtheorem{theorem}{Theorem}
\newtheorem{proposition}[theorem]{Proposition}
\newtheorem{corollary}[theorem]{Corollary}
\newtheorem{lemma}[theorem]{Lemma}
\newtheorem{definition}[theorem]{Definition}
\newtheorem{assumption}{Assumption}
\newtheorem{remark}{Remark}

\DeclareMathOperator{\Var}{Var}
\DeclareMathOperator{\Cov}{Cov}
\DeclareMathOperator{\KL}{KL}
\DeclareMathOperator{\Harm}{Harm}
\DeclareMathOperator{\Fail}{Fail}

\newcommand{\E}{\mathbb{E}}

\newcommand{\cH}{\mathcal{H}}
\newcommand{\cR}{\mathcal{R}}
\newcommand{\cV}{\mathcal{V}}

\title{Why Is RLHF Alignment Shallow? A Gradient Analysis}

\author{Robin Young \\
  Department of Computer Science and Technology \\
  University of Cambridge \\
  Cambridge, UK \\
  \texttt{robin.young@cl.cam.ac.uk}}

\begin{document}
\maketitle

\begin{abstract}
Why is safety alignment in LLMs shallow? We prove that gradient-based alignment inherently concentrates on positions where harm is decided and vanishes beyond. Using a martingale decomposition of sequence-level harm, we derive an exact characterization of alignment gradients. The gradient at position $t$ equals the covariance between the conditional expected harm and the score function. This implies that positions beyond the harm horizon where the output's harmfulness is already determined receive zero gradient signal during training. This explains empirical observations that KL divergence between aligned and base models concentrates on early tokens. Consequently, standard alignment objectives cannot produce deep alignment, regardless of optimization quality. We introduce the concept of harm information $I_t$, which quantifies each position's influence on harm, and prove that equilibrium KL divergence tracks this quantity. Finally, we derive an objective based on recovery penalties that creates gradient signal at all positions, providing theoretical grounding for empirically successful data augmentation techniques.
\end{abstract}

\section{Introduction}
\label{sec:intro}

Large language models (LLMs) undergo safety alignment to reduce harmful outputs, typically through reinforcement learning from human feedback (RLHF) or direct preference optimization. A growing body of evidence suggests this alignment is troublingly fragile. It was demonstrated that the behavioral shift from alignment concentrates in the first few tokens; the KL divergence between aligned and base models rapidly decays to near-zero beyond a shallow prefix \cite{qi2025safety}. This shallow alignment creates vulnerabilities to prefilling attacks, where an adversary supplies the first few tokens of a harmful response, bypassing the aligned model's safety behavior.

Why is alignment shallow? The prevailing view treats this as a failure to be remedied perhaps through better training data \citep{qi2025safety} or architectural interventions \citep{zou2024circuitbreaker}. We argue instead that shallow alignment is optimal given standard objectives. The shallowness is not a bug but a necessary consequence of how gradient-based optimization interacts with the structure of sequence-level harm.

We make several contributions in this work. We provide a martingale decomposition of sequence-level harm into per-position innovations, enabling precise analysis of where alignment pressure originates (\Cref{sec:martingale}). We give a characterization of alignment gradients, showing that the gradient at position $t$ equals the covariance between conditional expected harm and the score function (\Cref{thm:gradient}). We prove that positions beyond the harm horizon receive zero gradient, explaining why standard training cannot produce deep alignment (\Cref{thm:zero-gradient}). We introduce the concept of harm information $I_t$, which quantifies each position's influence on the final harm determination, and prove that equilibrium KL divergence tracks this quantity (\Cref{sec:equilibrium}). Finally, we derive a deep alignment objective based on recovery penalties that creates gradient signal at all positions, providing theoretical justification for empirically successful interventions (\Cref{sec:deep}).

\section{Related Work}
\label{sec:related}

Our work connects several lines of research on LLM alignment, adversarial robustness, and the theoretical foundations of RLHF.

Modern LLM alignment traces to RLHF \citep{christiano2017rlhf, ouyang2022training}, which trains a reward model on human preferences and then optimizes the policy via reinforcement learning with a KL penalty against the base model. Direct Preference Optimization \cite{rafailov2023dpo} eliminates the explicit reward model by reparameterizing the RLHF objective, enabling direct policy optimization from preference data. Constitutional AI \citep{bai2022constitutionalaiharmlessnessai} uses AI feedback to scale alignment. These methods all optimize sequence-level objectives with KL regularization, which is the setting we analyze.

\citet{qi2025safety} demonstrated empirically that safety alignment concentrates in early tokens as the KL divergence between aligned and base models decays rapidly beyond a shallow prefix. This creates vulnerabilities to prefilling attacks, where adversarial prefixes bypass safety guardrails \citep{andriushchenko2024jailbreaking}. Prior work formalized alignment depth using Markov chains, proving geometric convergence to harmful absorbing states and showing that ensemble width can compensate for shallow individual models \cite{kao2025safety}. that work characterizes the structure of shallow alignment at inference time but does not explain why training produces it, leaving open whether better optimization, more data, or longer training could yield deep alignment. Our work answers this question in the negative as we prove that zero gradient signal reaches positions beyond the harm horizon under standard objectives, making shallow alignment the optimal solution rather than a training artifact. 

The GCG attack \citep{zou2023universal} finds universal adversarial suffixes via greedy coordinate gradient search, achieving high attack success rates even on black-box models through transfer. Subsequent work has improved attack efficiency \citep{liu2024autodan,liao2024amplegcg} and explored multi-turn jailbreaks \citep{chao2025jailbreaking}. These attacks exploit shallow alignment by steering early tokens toward harmful continuations. Our theoretical framework explains why such attacks succeed: once the harm horizon is bypassed, the model has no training signal to recover.

Representation-level interventions offer an alternative to output-level alignment. Representation engineering \citep{zou2025representation}, refusal direction ablation \citep{arditi2024refusal}, and circuit breakers \citep{zou2024circuitbreaker} operate on internal activations rather than output distributions, and thus fall outside our gradient-theoretic framework.


Recent work analyzes sample complexity \citep{zhu2023principled}, reward model scaling \citep{rafailov2024scaling}, and iterative DPO convergence \citep{xiong2024iterative}; our contribution is orthogonal, characterizing where in the sequence alignment pressure is applied rather than how much data or compute is needed.

\section{Setup}
\label{sec:setup}

\paragraph{Autoregressive Generation.} Let $\cV$ denote the vocabulary. A language model $P_\theta$ defines a distribution over sequences $y = (y_1, \ldots, y_T)$ via the autoregressive factorization:
\begin{equation}
    P_\theta(y) = \prod_{t=1}^{T} P_\theta(y_t \mid y_{<t})
\end{equation}
where $y_{<t} = (y_1, \ldots, y_{t-1})$ and $y_{<1} = \emptyset$ (or a fixed prompt). We write $P_\theta(y_t \mid y_{<t})$ for the conditional distribution at position $t$.

\paragraph{Harm Function.} We assume a harm function $\Harm: \cV^* \to [0,1]$ that maps complete sequences to a harm score. This abstracts reward models, classifiers, or human judgments. We assume $\Harm$ depends only on the output sequence (not on internal model states).

\paragraph{Alignment Objective.} Standard alignment minimizes expected harm subject to a KL penalty against a base model $P_{\text{base}}$:
\begin{equation}
    \label{eq:alignment-objective}
    \cH(\theta) = \lambda \cdot \E_{y \sim P_\theta}[\Harm(y)] + D_{\KL}(P_\theta \| P_{\text{base}})
\end{equation}
where $\lambda > 0$ controls the strength of harm aversion. The KL term prevents collapse to degenerate distributions and preserves capabilities.

\paragraph{Goal.} We seek to understand how gradient-based minimization of Eq~\eqref{eq:alignment-objective} distributes behavioral changes across positions $t = 1, \ldots, T$.

\section{Martingale Decomposition of Harm}
\label{sec:martingale}

The key insight is that expected harm, viewed as a function of the partial sequence $y_{\leq t}$, forms a martingale. This decomposition reveals which positions carry information about harm.

\begin{definition}[Conditional Expected Harm]
For a partial sequence $y_{\leq t} = (y_1, \ldots, y_t)$, define
\begin{equation}
    h_t(y_{\leq t}) := \E[\Harm(y) \mid y_1, \ldots, y_t]
\end{equation}
where the expectation is over continuations $y_{>t}$ drawn from $P_\theta(\cdot \mid y_{\leq t})$.
\end{definition}

\begin{proposition}[Martingale Property]
\label{prop:martingale}
The sequence $(h_t)_{t=0}^T$ is a martingale with respect to the filtration generated by $(y_t)$:
\begin{equation}
    \E[h_t \mid y_{<t}] = h_{t-1}(y_{<t})
\end{equation}
\end{proposition}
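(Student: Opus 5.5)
The plan is to recognize $h_t$ as a Doob martingale and verify the identity with the tower property, spelled out via the autoregressive factorization. Fix the filtration $\mathcal{F}_t = \sigma(y_1,\ldots,y_t)$, with $\mathcal{F}_0$ trivial (or generated by the fixed prompt). Then
\[
h_0 = \E_{y\sim P_\theta}[\Harm(y)], \qquad h_T(y_{\le T}) = \Harm(y).
\]
Since $\Harm$ takes values in $[0,1]$, every $h_t$ is bounded, so integrability is immediate. Adaptedness holds by construction, because $h_t$ is a function of $y_{\le t}$ alone.

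The substantive step is the one-step identity. By definition,
\[
h_t(y_{\le t}) = \sum_{y_{>t}} \Harm(y)\prod_{s=t+1}^{T} P_\theta(y_s\mid y_{<s}).
\]
Taking the expectation of $h_t$ over $y_t\sim P_\theta(\cdot\mid y_{<t})$ multiplies in the missing factor $P_\theta(y_t\mid y_{<t})$ and sums over $y_t$. This yields
\[
\sum_{y_{\ge t}}\Harm(y)\prod_{s=t}^{T}P_\theta(y_s\mid y_{<s}),
\]
which is exactly $h_{t-1}(y_{<t})$. Equivalently, the autoregressive factorization says that the conditional law of $y_{>t-1}$ given $y_{<t}$ factors as the law of $y_t$ given $y_{<t}$ followed by the law of $y_{>t}$ given $y_{\le t}$. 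So the claim is the tower property
\[
\E\bigl[\E[\Harm\mid\mathcal{F}_t]\mid\mathcal{F}_{t-1}\bigr] = \E[\Harm\mid\mathcal{F}_{t-1}].
\]

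The main (mild) obstacle is bookkeeping rather than analysis, and it has two parts.

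\emph{Probability measure.} The conditional expectations must be taken under the same measure $P_\theta$ that defines $h_t$; the martingale property is with respect to $P_\theta$, not $P_{\text{base}}$.

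\emph{Variable length.} Sequences may have variable length (e.g., stopping at an EOS token before $T$). I would handle this by padding after EOS with a deterministic absorbing token, so that $T$ is a fixed horizon and all sums are finite because $\cV$ is finite. Conditioning is then well-defined whenever $P_\theta(y_{<t})>0$. On null prefixes $h_{t-1}$ can be defined arbitrarily, since the identity only needs to hold almost surely.

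Finally, I would remark on a consequence that Section~\ref{sec:martingale} uses next. The innovations $h_t-h_{t-1}$ have zero conditional mean, and they telescope:
\[
\Harm(y)-h_0=\sum_{t=1}^{T}(h_t-h_{t-1}).
\]
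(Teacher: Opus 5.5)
Your proof is correct and takes the same approach as the paper, which simply invokes the tower property $\E[\E[\Harm(y)\mid y_{\le t}]\mid y_{<t}] = \E[\Harm(y)\mid y_{<t}] = h_{t-1}(y_{<t})$. You check that identity explicitly by summing over $y_t$ in the autoregressive factorization, and your extra bookkeeping (boundedness, adaptedness, the measure $P_\theta$, EOS padding) is sound but not needed at the paper's level of formality.
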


\begin{proof}
\small
By the tower property of conditional expectation:
\begin{align}
    \E[h_t(y_{\leq t}) \mid y_{<t}] 
    &= \E[\E[\Harm(y) \mid y_{\leq t}] \mid y_{<t}] \\
    &= \E[\Harm(y) \mid y_{<t}] = h_{t-1}(y_{<t})
\end{align}
\end{proof}

\begin{definition}[Harm Innovation]
The \emph{innovation} at position $t$ is:
\begin{equation}
    \Delta_t := h_t(y_{\leq t}) - h_{t-1}(y_{<t})
\end{equation}
This measures the change in expected harm upon observing token $y_t$.
\end{definition}

\begin{proposition}[Doob Decomposition]
\label{prop:doob}
The harm function admits the decomposition:
\begin{equation}
    \Harm(y) = \E[\Harm] + \sum_{t=1}^{T} \Delta_t
\end{equation}
where the innovations $\Delta_t$ are orthogonal: $\E[\Delta_s \cdot \Delta_t] = 0$ for $s \neq t$.
\end{proposition}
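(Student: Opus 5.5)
The decomposition itself comes from telescoping the innovations. By definition, $h_0 = \E[\Harm(y)]$, since we condition on nothing beyond the fixed prompt. Also $h_T(y_{\leq T}) = \E[\Harm(y)\mid y_1,\ldots,y_T] = \Harm(y)$, because $\Harm$ is a deterministic function of the complete sequence. So
\begin{equation}
    \sum_{t=1}^{T} \Delta_t = \sum_{t=1}^{T}\bigl(h_t - h_{t-1}\bigr) = h_T - h_0 = \Harm(y) - \E[\Harm],
\end{equation}
which rearranges to the claimed identity. We should note one convention: sequences are treated as having a fixed horizon $T$. For variable-length outputs, we pad with an end-of-sequence token after which $h_t$ is constant, so the corresponding $\Delta_t$ vanish.

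For orthogonality, fix $s<t$; the case $s>t$ follows by symmetry. The key observation is that $\Delta_s$ is a function of $y_{\leq s}$, hence measurable with respect to $y_{<t}$, since $s\leq t-1$. By \Cref{prop:martingale}, $\E[\Delta_t \mid y_{<t}] = \E[h_t\mid y_{<t}] - h_{t-1}(y_{<t}) = 0$. Applying the tower property and pulling out the known factor then gives
\begin{equation}
    \E[\Delta_s\Delta_t] = \E\bigl[\E[\Delta_s\Delta_t\mid y_{<t}]\bigr] = \E\bigl[\Delta_s\,\E[\Delta_t\mid y_{<t}]\bigr] = 0 .
\end{equation}

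The only real subtlety is justifying these manipulations. Since $\Harm\in[0,1]$, every $h_t$ lies in $[0,1]$ and every $\Delta_t$ lies in $[-1,1]$. All the expectations are therefore finite, and the "pull out what is known" step is legitimate. When $\cV$ is finite and $T$ is fixed, the conditional expectations are just finite sums weighted by $P_\theta(y_{>t}\mid y_{\leq t})$. The one care point is prefixes of probability zero, on which $h_t$ can be defined arbitrarily without affecting any of the expectations.
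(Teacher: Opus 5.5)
Your proof is correct and follows the paper's argument: you get the decomposition by telescoping $h_T - h_0$, and orthogonality by the tower property plus pulling out the known factor $\Delta_s$. The only difference is that you condition on $y_{<t}$ rather than the paper's $y_{\leq s}$, which lets you use $\E[\Delta_t \mid y_{<t}] = 0$ directly from \Cref{prop:martingale} instead of applying it iteratively, a slightly cleaner choice.
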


\begin{proof}
The equality follows from telescoping: $h_T = \Harm(y)$ and $h_0 = \E[\Harm]$. For orthogonality, assume $s < t$:
\begin{align}
    \E[\Delta_s \cdot \Delta_t] 
    &= \E[\E[\Delta_s \cdot \Delta_t \mid y_{\leq s}]] \\
    &= \E[\Delta_s \cdot \E[\Delta_t \mid y_{\leq s}]] \\
    &= \E[\Delta_s \cdot 0] = 0
\end{align}
where $\E[\Delta_t \mid y_{\leq s}] = 0$ follows from iterated application of the martingale property.
\end{proof}

\begin{definition}[Harm Information]
\label{def:harm-information}
The \emph{harm information} at position $t$ is:
\begin{equation}
    I_t := \E[\Delta_t^2] = \E[\Var(h_t \mid y_{<t})]
\end{equation}
This quantifies how much variance in harm is explained by position $t$.
\end{definition}

\begin{proposition}[Variance Reduction Characterization]
\label{prop:variance-reduction}
The harm information admits the equivalent representation:
\begin{equation}
    I_t = \E\big[\Var(\Harm \mid y_{<t})\big] - \E\big[\Var(\Harm \mid y_{\leq t})\big]
\end{equation}
That is, $I_t$ equals the expected reduction in conditional variance of harm upon observing token $y_t$.
\end{proposition}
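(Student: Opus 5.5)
The plan is to apply the law of total variance at two nested levels of conditioning and subtract. Write $\Harm$ for $\Harm(y)$ and recall that $h_{t-1} = \E[\Harm \mid y_{<t}]$ and $h_t = \E[\Harm \mid y_{\leq t}]$. Conditioning on $y_{<t}$ and splitting according to the additional information $y_t$, the conditional law of total variance gives
\begin{equation*}
\Var(\Harm \mid y_{<t}) = \E\big[\Var(\Harm \mid y_{\leq t}) \,\big|\, y_{<t}\big] + \Var\big(\E[\Harm \mid y_{\leq t}] \,\big|\, y_{<t}\big).
\end{equation*}
The last term is exactly $\Var(h_t \mid y_{<t})$.

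Taking the outer expectation over $y_{<t}$ and using the tower property on the first term yields
\begin{equation*}
\E[\Var(\Harm \mid y_{<t})] = \E[\Var(\Harm \mid y_{\leq t})] + \E[\Var(h_t \mid y_{<t})].
\end{equation*}
Rearranging gives the claimed identity, provided $I_t = \E[\Var(h_t \mid y_{<t})]$.

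That last equality is the second expression in Definition~\ref{def:harm-information}, and I would verify it directly. By Proposition~\ref{prop:martingale}, $\E[h_t \mid y_{<t}] = h_{t-1}$. Hence $\Var(h_t \mid y_{<t}) = \E[(h_t - h_{t-1})^2 \mid y_{<t}] = \E[\Delta_t^2 \mid y_{<t}]$, and taking expectations gives $\E[\Delta_t^2]$.

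The only delicate step is the conditional law of total variance. I would prove it by expanding $\Harm - h_{t-1} = (\Harm - h_t) + (h_t - h_{t-1})$ and squaring. The cross term has zero conditional expectation given $y_{<t}$: first condition on $y_{\leq t}$, where $h_t - h_{t-1}$ is measurable and $\E[\Harm - h_t \mid y_{\leq t}] = 0$. Integrability is automatic because $\Harm \in [0,1]$.
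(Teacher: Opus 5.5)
Your proof is correct, but it takes a different route from the paper's. The paper applies the \emph{unconditional} law of total variance twice, with $Z = y_{\leq t}$ and with $Z = y_{<t}$, and subtracts to obtain $\Var(h_t) - \Var(h_{t-1})$ on one side. It then invokes the Doob decomposition and the orthogonality of all innovations (\Cref{prop:doob}) to get $\Var(h_t) = \sum_{s \le t} I_s$, and hence $\Var(h_t) - \Var(h_{t-1}) = I_t$. You instead work locally at step $t$: one conditional variance decomposition given $y_{<t}$, plus only the one-step martingale property, which identifies $\Var(h_t \mid y_{<t})$ with $\E[\Delta_t^2 \mid y_{<t}]$. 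You never need orthogonality across different steps. Your route also gives slightly more. Before averaging, it yields the prefix-wise identity $\Var(\Harm \mid y_{<t}) = \E[\Var(\Harm \mid y_{\leq t}) \mid y_{<t}] + \E[\Delta_t^2 \mid y_{<t}]$. It also actually proves the second equality in \Cref{def:harm-information}, which the paper asserts without justification. The paper's route, in exchange, reuses the already-established orthogonality of innovations and produces $\Var(h_t) = \sum_{s \le t} I_s$ as a byproduct, which is exactly what feeds \Cref{cor:variance}.
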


\begin{proof}
We use two applications of the law of total variance, $\Var(X) = \E[\Var(X \mid Z)] + \Var(\E[X \mid Z])$.

Applying with $X = \Harm(y)$ and $Z = y_{\leq t}$:
\begin{equation}
    \Var(\Harm) = \E[\Var(\Harm \mid y_{\leq t})] + \Var(h_t) \tag{A}
\end{equation}
Applying with $Z = y_{<t}$:
\begin{equation}
    \Var(\Harm) = \E[\Var(\Harm \mid y_{<t})] + \Var(h_{t-1}) \tag{B}
\end{equation}
Subtracting (A) from (B):
\begin{equation}
    \begin{aligned}
    \Var(h_t) - \Var(h_{t-1}) &= \E[\Var(\Harm \mid y_{<t})] \\
    &\quad - \E[\Var(\Harm \mid y_{\leq t})]
    \end{aligned}
\end{equation}
From the Doob decomposition and orthogonality of innovations: $\Var(h_t) = \sum_{s=1}^{t} I_s$. Thus $\Var(h_t) - \Var(h_{t-1}) = I_t$, completing the proof.
\end{proof}

\begin{corollary}[Variance Decomposition]
\label{cor:variance}
The total variance of harm decomposes as:
\begin{equation}
    \Var(\Harm(y)) = \sum_{t=1}^{T} I_t
\end{equation}
\end{corollary}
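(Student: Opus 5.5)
The plan is to combine the Doob decomposition of \Cref{prop:doob} with the orthogonality of the innovations. By \Cref{prop:doob},
\begin{equation}
    \Harm(y) - \E[\Harm] = \sum_{t=1}^{T} \Delta_t ,
\end{equation}
so the variance of harm is the second moment of this sum:
\begin{equation}
    \Var(\Harm(y)) = \E\Big[\Big(\sum_{t=1}^{T} \Delta_t\Big)^2\Big] .
\end{equation}
Expanding the square gives diagonal terms $\E[\Delta_t^2]$ and cross terms $\E[\Delta_s \Delta_t]$ with $s \neq t$. The cross terms vanish by the orthogonality part of \Cref{prop:doob}. Each diagonal term equals $I_t$ by \Cref{def:harm-information}. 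Summing over $t$ yields the claim.

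Two boundary facts need checking. First, $h_0 = \E[\Harm]$, because $y_{<1}$ is empty or a fixed prompt. Second, $h_T = \Harm(y)$, because $\Harm$ is a deterministic function of the complete sequence. Both are already used in the telescoping step of \Cref{prop:doob}. Integrability is automatic: $\Harm \in [0,1]$, so every $h_t$ and $\Delta_t$ is bounded, and all the expectations and the expansion of the finite square are well defined.

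As a cross-check, \Cref{prop:variance-reduction} gives a second route. It shows $I_t = \E[\Var(\Harm \mid y_{<t})] - \E[\Var(\Harm \mid y_{\leq t})]$, and summing over $t$ telescopes to
\begin{equation}
    \E[\Var(\Harm \mid y_{<1})] - \E[\Var(\Harm \mid y_{\leq T})] = \Var(\Harm) - 0 .
\end{equation}
The subtracted term is zero because harm is determined by the full sequence. This route avoids the cross terms entirely.

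The argument contains no real obstacle. The only step that needs care is the orthogonality of the cross terms, which rests on $\E[\Delta_t \mid y_{\leq s}] = 0$ for $s<t$ via iterated conditioning. Since that is supplied by \Cref{prop:doob}, I will simply cite it.
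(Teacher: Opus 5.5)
Your proof is correct, but your main route differs from the paper's. You expand $\E[(\sum_t \Delta_t)^2]$ and kill the cross terms by orthogonality. The paper instead sums the identity of \Cref{prop:variance-reduction} over $t$ and telescopes, which is exactly your cross-check.

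The paper's route emphasizes reading $I_t$ as an expected reduction in conditional variance, which sets up the chain-rule interpretation. Your route is logically more economical. The paper's proof of \Cref{prop:variance-reduction} itself invokes $\Var(h_t) = \sum_{s \le t} I_s$ from orthogonality of innovations, which is your argument applied to $h_t$. The corollary is simply the $t = T$ case of that identity, using $h_T = \Harm(y)$, so the detour through variance reductions is not logically needed.

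For the same reason, your claim that the telescoping route ``avoids the cross terms entirely'' holds only if \Cref{prop:variance-reduction} is proved without orthogonality. The paper's proof of it does not do that. You could do it with the conditional law of total variance, $\Var(\Harm \mid y_{<t}) = \E[\Var(\Harm \mid y_{\le t}) \mid y_{<t}] + \Var(h_t \mid y_{<t})$. Taking expectations gives $I_t = \E[\Var(\Harm \mid y_{<t})] - \E[\Var(\Harm \mid y_{\le t})]$ directly, and with that proof your second route is genuinely orthogonality-free.
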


\begin{proof}
By telescoping:
\begin{equation}
\small
    \begin{aligned}
    \sum_{t=1}^{T} I_t &= \sum_{t=1}^{T} \Big(\E[\Var(\Harm \mid y_{<t})] - \E[\Var(\Harm \mid y_{\leq t})]\Big) \\
    &= \Var(\Harm \mid \emptyset) - \Var(\Harm \mid y) \\
    &= \Var(\Harm)
    \end{aligned}
\end{equation}
The final equality uses $\Var(\Harm \mid y) = 0$ since $\Harm(y)$ is determined by $y$.
\end{proof}

\begin{remark}[Information-Theoretic Interpretation]
\label{rem:info-theory}
Define the variance-based conditional information $I_{\mathrm{var}}(X; Z \mid W) := \E[\Var(X \mid W)] - \E[\Var(X \mid W, Z)]$. Then $I_t = I_{\mathrm{var}}(\Harm; y_t \mid y_{<t})$, and the variance decomposition becomes the chain rule:
\begin{equation}
    I_{\mathrm{var}}(\Harm; y) = \sum_{t=1}^{T} I_{\mathrm{var}}(\Harm; y_t \mid y_{<t})
\end{equation}
This parallels the mutual information chain rule $I(\Harm; y) = \sum_t I(\Harm; y_t \mid y_{<t})$. For jointly Gaussian random variables, variance-based and entropy-based information are monotonically related, so $I_{\mathrm{var}} = 0 \Leftrightarrow I = 0$. In general, the relationship is more subtle as variance measures second-moment dependence while mutual information captures all statistical dependence.
\end{remark}

The harm information $I_t$ will prove central: it governs both gradient magnitude during training and KL divergence at equilibrium.

\section{Gradient Characterization}
\label{sec:gradient}

We now derive the key result: an exact formula for the gradient of expected harm with respect to the parameters governing position $t$.

\begin{theorem}[Gradient Characterization]
\label{thm:gradient}
The gradient of expected harm with respect to parameters $\theta$ satisfies:
\begin{equation}
    \begin{aligned}
    \nabla_\theta \E[\Harm(y)] &= \sum_{t=1}^{T} \E_{y_{<t}}\bigg[ \Cov_{y_t \mid y_{<t}}\Big( h_t(y_{\leq t}) \\
    &\qquad\qquad\qquad \nabla_\theta \log P_\theta(y_t \mid y_{<t}) \Big) \bigg]
    \end{aligned}
\end{equation}
\end{theorem}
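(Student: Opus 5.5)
The plan is to start from the score-function (REINFORCE) identity for the full sequence distribution and then push each per-position term down to position $t$ by conditioning. For a finite vocabulary and fixed horizon $T$, $\E_{y\sim P_\theta}[\Harm(y)] = \sum_y P_\theta(y)\Harm(y)$ is a finite sum, so differentiation commutes with summation. Using $\nabla_\theta P_\theta = P_\theta \nabla_\theta \log P_\theta$, this gives
\begin{equation*}
\nabla_\theta \E[\Harm(y)] = \E\Big[\Harm(y)\, \nabla_\theta \log P_\theta(y)\Big].
\end{equation*}
The autoregressive factorization then splits the score as $\nabla_\theta\log P_\theta(y) = \sum_{t=1}^T s_t$, where $s_t := \nabla_\theta \log P_\theta(y_t\mid y_{<t})$. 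Hence $\nabla_\theta\E[\Harm] = \sum_t \E[\Harm(y)\, s_t]$.

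Each term is then handled separately. Since $s_t$ is a function of $y_{\leq t}$ only, the tower property (as in the proof of \Cref{prop:martingale}) gives $\E[\Harm(y)\, s_t] = \E[\E[\Harm\mid y_{\leq t}]\, s_t] = \E[h_t(y_{\leq t})\, s_t]$. Conditioning further on $y_{<t}$, I write
\begin{equation*}
\E[h_t s_t\mid y_{<t}] = \Cov_{y_t\mid y_{<t}}(h_t, s_t) + \E[h_t\mid y_{<t}]\,\E[s_t\mid y_{<t}].
\end{equation*}
The second product vanishes by the zero-mean property of the score: $\E[s_t\mid y_{<t}] = \sum_{v} \nabla_\theta P_\theta(v\mid y_{<t}) = \nabla_\theta 1 = 0$. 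Taking the outer expectation over $y_{<t}$ and summing over $t$ yields the stated formula. As a side remark, $\E[h_t\mid y_{<t}] = h_{t-1}$ by \Cref{prop:martingale}, so the covariance can equivalently be written as $\E[\Delta_t s_t\mid y_{<t}]$. This ties the result to the innovations and will be useful for connecting to $I_t$ later.

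The main point requiring care, rather than a real obstacle, is that $h_t$ itself depends on $\theta$ through the continuation distribution $P_\theta(\cdot\mid y_{\leq t})$. One might worry that a term $\E[\nabla_\theta h_t]$ is missing. My plan avoids differentiating $h_t$ at all: the derivative is taken once, on the full joint expectation, where the score identity already captures all $\theta$-dependence, including that of later positions (through $s_{t'}$ for $t'>t$). The conditioning step is then purely a rewriting of each $\E[\Harm\, s_t]$ at the current $\theta$. I will state this explicitly so that the formula is read as an exact identity rather than one obtained by freezing continuations. I will also note the standing assumptions (finite $\cV$, finite $T$, $P_\theta$ differentiable and positive where needed) that justify the interchange of gradient and sum.
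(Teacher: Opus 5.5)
Your proof is correct, and it reaches the result by a more direct route than the paper's. The paper substitutes the Doob decomposition $\Harm(y) = \E[\Harm] + \sum_s \Delta_s$ into the score-function identity. It then removes the $T^2$ cross terms $\E[\Delta_s\, s_t]$ by a three-case analysis: $s<t$ vanishes by the zero-mean score, $s>t$ vanishes by the martingale property, and only $s=t$ survives. That leaves $\E_{y_{<t}}[\Cov(\Delta_t, s_t)]$, after which $\Delta_t$ is replaced by $h_t$ because $h_{t-1}$ is fixed given $y_{<t}$. You instead project $\Harm$ onto $y_{\le t}$ separately for each score term. Your single tower-property step $\E[\Harm\, s_t] = \E[h_t\, s_t]$ handles all the paper's $s>t$ cases at once, since $\sum_{s>t}\Delta_s = \Harm - h_t$. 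Your covariance step handles the constant and the $s<t$ cases together, since $\E[h_t \mid y_{<t}] = h_{t-1} = \E[\Harm] + \sum_{s<t}\Delta_s$ is exactly the term multiplied by the zero-mean score. Your version therefore needs neither the decomposition nor orthogonality of innovations; beyond the score identity it uses only the tower property and the zero-mean score, not the martingale property itself. It also states explicitly a point the paper leaves implicit: the $\theta$-dependence of $h_t$ adds no extra term, because the derivative is taken once on the joint expectation. The paper's version, in exchange, shows that the innovation $\Delta_t$ is what survives at position $t$. That is the bridge to $I_t = \E[\Delta_t^2]$ used in the horizon and Cauchy--Schwarz results, and your side remark recovers exactly this.
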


\begin{proof}
We use the policy gradient identity. For any function $f(y)$:
\begin{equation}
    \nabla_\theta \E_{y \sim P_\theta}[f(y)] = \E_{y \sim P_\theta}\left[ f(y) \cdot \nabla_\theta \log P_\theta(y) \right]
\end{equation}
Applying the autoregressive factorization:
\begin{equation}
    \nabla_\theta \log P_\theta(y) = \sum_{t=1}^{T} \nabla_\theta \log P_\theta(y_t \mid y_{<t})
\end{equation}
Substituting $f(y) = \Harm(y) = \E[\Harm] + \sum_{s=1}^{T} \Delta_s$:
\begin{align}
    \nabla_\theta \E[\Harm] 
    &= \E\bigg[ \left( \E[\Harm] + \sum_s \Delta_s \right) \notag \\
    &\qquad \cdot \sum_t \nabla_\theta \log P_\theta(y_t \mid y_{<t}) \bigg]
\end{align}
The constant $\E[\Harm]$ contributes zero (since $\E[\nabla_\theta \log P_\theta(y_t \mid y_{<t})] = 0$).

For the cross terms $\Delta_s \cdot \nabla_\theta \log P_\theta(y_t \mid y_{<t})$, we consider three cases. 

If $s < t$, then $\Delta_s$ is determined by $y_{\leq s}$, so it is constant given $y_{<t}$. Hence:
\begin{equation}
    \begin{aligned}
    &\E[\Delta_s \cdot \nabla_\theta \log P_\theta(y_t \mid y_{<t})] \\
    &\qquad = \E[\Delta_s \cdot \E[\nabla_\theta \log P_\theta(y_t \mid y_{<t}) \mid y_{<t}]] = 0
    \end{aligned}
\end{equation}

If $s > t$, then by the martingale property, $\E[\Delta_s \mid y_{\leq t}] = 0$. Hence:
\begin{equation}
    \begin{aligned}
    &\E[\Delta_s \cdot \nabla_\theta \log P_\theta(y_t \mid y_{<t})] \\
    &\qquad = \E[\E[\Delta_s \mid y_{\leq t}] \cdot \nabla_\theta \log P_\theta(y_t \mid y_{<t})] = 0
    \end{aligned}
\end{equation}

If $s = t$, this term survives. We have:
\begin{align}
    &\E[\Delta_t \cdot \nabla_\theta \log P_\theta(y_t \mid y_{<t})] \notag \\
    &\qquad = \E_{y_{<t}}\left[ \E_{y_t \mid y_{<t}}[\Delta_t \cdot \nabla_\theta \log P_\theta(y_t \mid y_{<t})] \right] \\
    &\qquad = \E_{y_{<t}}\left[ \Cov_{y_t \mid y_{<t}}(\Delta_t, \nabla_\theta \log P_\theta(y_t \mid y_{<t})) \right]
\end{align}
where the covariance appears because $\E[\nabla_\theta \log P_\theta(y_t \mid y_{<t}) \mid y_{<t}] = 0$.

Finally, note that $\Delta_t = h_t(y_{\leq t}) - h_{t-1}(y_{<t})$, where $h_{t-1}(y_{<t})$ is constant given $y_{<t}$. Thus $\Cov(\Delta_t, \cdot) = \Cov(h_t, \cdot)$, yielding the result.

\end{proof}

\begin{remark}[Intuition]
The gradient at position $t$ depends only on how the choice of $y_t$ correlates with the resulting expected harm $h_t(y_{\leq t})$. If all tokens $y_t$ lead to similar expected harm (given $y_{<t}$) the covariance is small and the gradient is weak.
\end{remark}

\section{The Harm Horizon and Zero-Gradient Theorem}
\label{sec:horizon}

We now formalize when harm is ``determined'' by an early prefix, and prove that later positions receive no gradient signal. The following theorem establishes a tight equivalence between the harm horizon and the decay of harm information.

\begin{theorem}[Harm Horizon Characterization]
\label{thm:horizon-equiv}
The following are equivalent:
\begin{enumerate}
    \item[(i)] $I_t = 0$ for all $t > k$.
    \item[(ii)] There exists a measurable function $g: \cV^k \to [0,1]$ such that $\Harm(y) = g(y_{\leq k})$ almost surely.
\end{enumerate}
We call the smallest such $k$ the \emph{harm horizon}.
\end{theorem}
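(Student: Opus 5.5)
The plan is to use the variance decomposition from \Cref{prop:variance-reduction} and \Cref{cor:variance}, together with the telescoping structure of the innovations. The key identity is that for any $k$,
\begin{equation}
    \sum_{t=k+1}^{T} I_t = \E\big[\Var(\Harm \mid y_{\leq k})\big] - \E\big[\Var(\Harm \mid y_{\leq T})\big] = \E\big[\Var(\Harm \mid y_{\leq k})\big],
\end{equation}
which follows by summing the variance-reduction representation of $I_t$ over $t=k+1,\dots,T$ and using $\Var(\Harm\mid y)=0$, as in the proof of \Cref{cor:variance}. Equivalently, by orthogonality of innovations (\Cref{prop:doob}), $\sum_{t>k} I_t = \E[(\Harm(y) - h_k(y_{\leq k}))^2]$, since $\Harm - h_k = \sum_{t>k}\Delta_t$. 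I would state this identity first, since both directions reduce to it.

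For (i)$\Rightarrow$(ii): if $I_t=0$ for all $t>k$, the identity gives $\E[(\Harm(y)-h_k(y_{\leq k}))^2]=0$, so $\Harm(y)=h_k(y_{\leq k})$ almost surely. Take $g := h_k$, which is a measurable function of $y_{\leq k}$ and takes values in $[0,1]$ because it is a conditional expectation of a $[0,1]$-valued variable. With a finite vocabulary, measurability is automatic; in general, $g$ is defined as a version of the conditional expectation.

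For (ii)$\Rightarrow$(i): if $\Harm(y)=g(y_{\leq k})$ a.s., then $\Harm$ is $\sigma(y_{\leq k})$-measurable, so $h_t = \E[\Harm\mid y_{\leq t}] = \Harm$ a.s. for every $t\ge k$. Hence $\Delta_t = h_t - h_{t-1} = 0$ a.s. for all $t>k$, and therefore $I_t=\E[\Delta_t^2]=0$. Alternatively, this follows directly from the identity, since $\Var(\Harm\mid y_{\leq k})=0$, combined with nonnegativity of each $I_t$.

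The only real subtlety is the almost-sure qualifier. Conditional expectations are defined only up to null sets under $P_\theta$, so the equivalence is with respect to sequences of positive probability. Prefixes of zero probability, for example tokens suppressed by the policy, leave $h_t$ unconstrained there. I will state this explicitly and note that the harm horizon therefore depends on the support of $P_\theta$. Finally, the ``smallest such $k$'' is well defined because the set of valid $k$ is nonempty ($k=T$ always works with $g=\Harm$) and upward closed, by the identity and nonnegativity of $I_t$.
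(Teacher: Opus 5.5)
Your proof is correct. The direction (ii)$\Rightarrow$(i) is the paper's argument: for $t > k$, both $h_t$ and $h_{t-1}$ equal $g(y_{\leq k})$ almost surely, so $\Delta_t = 0$.

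For (i)$\Rightarrow$(ii) the routes differ slightly. The paper argues pointwise. $I_t = \E[\Delta_t^2] = 0$ forces $\Delta_t = 0$ a.s.\ for each $t > k$. Discarding these finitely many null sets gives the chain $\Harm = h_T = h_{T-1} = \cdots = h_k$ almost surely. This uses only nonnegativity of $\Delta_t^2$, and needs neither the orthogonality in \Cref{prop:doob} nor \Cref{prop:variance-reduction}.

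You instead take second moments of the same telescoped sum $\Harm - h_k = \sum_{t>k}\Delta_t$. That does require orthogonality, but in exchange you get the exact identity $\sum_{t>k} I_t = \E[(\Harm - h_k)^2] = \E[\Var(\Harm \mid y_{\leq k})]$. This is a quantitative strengthening the paper's argument does not provide. The tail harm information equals the mean-squared error of the best predictor of harm from the first $k$ tokens. So an approximate horizon, $\sum_{t>k} I_t \leq \varepsilon$, already places $\Harm$ within $\sqrt{\varepsilon}$ in $L^2$ of a function of $y_{\leq k}$. This is a natural companion to the ``smooth interpolation'' remark after \Cref{thm:cs-bound}.

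Two of your observations are points the paper leaves implicit: that the almost-sure qualifier makes the horizon depend on the support of $P_\theta$, and that the smallest valid $k$ exists because $k = T$ always works and condition (i) is upward closed.
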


\begin{proof}
\textbf{(ii) $\Rightarrow$ (i):} 
Assume $\Harm(y) = g(y_{\leq k})$ for some measurable $g$. Let $t > k$. Then
\begin{equation}
\small
    h_t(y_{\leq t}) = \E[\Harm(y) \mid y_{\leq t}] = \E[g(y_{\leq k}) \mid y_{\leq t}] = g(y_{\leq k})
\end{equation}
since the prefix $y_{\leq k}$ is determined by $y_{\leq t}$ when $t > k$. Similarly, $h_{t-1}(y_{<t}) = g(y_{\leq k})$ since $t - 1 \geq k$. Therefore $\Delta_t = h_t - h_{t-1} = 0$ almost surely, which implies $I_t = \E[\Delta_t^2] = 0$.

\textbf{(i) $\Rightarrow$ (ii):}
Assume $I_t = 0$ for all $t > k$. Since $I_t = \E[\Delta_t^2]$ and $\Delta_t^2 \geq 0$, we have $\Delta_t = 0$ almost surely for all $t > k$.

We proceed by backward induction. For $t = T$: $\Delta_T = 0$ a.s.\ implies $h_T = h_{T-1}$ a.s. Continuing down to $t = k + 1$:
\begin{equation}
    h_T = h_{T-1} = \cdots = h_{k+1} = h_k \quad \text{a.s.}
\end{equation}
Since $h_T(y) = \E[\Harm(y) \mid y] = \Harm(y)$, we have $\Harm(y) = h_k(y_{\leq k})$ almost surely. Setting $g := h_k$ completes the proof.
\end{proof}

\begin{theorem}[Zero Gradient Beyond Horizon]
\label{thm:zero-gradient}
Under either equivalent condition of \Cref{thm:horizon-equiv}, for all $t > k$:
\begin{equation}
\small
    \E_{y_{<t}}\left[\Cov_{y_t \mid y_{<t}}\Big(h_t(y_{\leq t}) \nabla_\theta \log P_\theta(y_t \mid y_{<t})\Big)\right] = 0,
\end{equation}
and hence position $t$ contributes nothing to $\nabla_\theta \E[\Harm(y)]$.
\end{theorem}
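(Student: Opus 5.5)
The plan is to reduce the statement to the almost-sure vanishing of the innovations beyond the horizon, which \Cref{thm:horizon-equiv} already supplies, and then to reuse the conditional-covariance manipulation from the proof of \Cref{thm:gradient}. Fix $t > k$. By the (ii) $\Rightarrow$ (i) direction of \Cref{thm:horizon-equiv}, or directly from its argument, $\Delta_t = h_t(y_{\leq t}) - h_{t-1}(y_{<t}) = 0$ almost surely. Hence $h_t(y_{\leq t}) = h_{t-1}(y_{<t})$ almost surely, so $h_t$ is a function of $y_{<t}$ alone.

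\textbf{Covariance step.} For $P_\theta$-almost every prefix $y_{<t}$, the random variable $h_t(y_{\leq t})$ is constant under the conditional law of $y_t$ given $y_{<t}$. The covariance of a constant with any integrable vector is zero. I would spell this out as
\begin{equation*}
\Cov_{y_t \mid y_{<t}}\big(h_t, \nabla_\theta \log P_\theta(y_t \mid y_{<t})\big) = h_{t-1}(y_{<t}) \cdot \E_{y_t \mid y_{<t}}\big[\nabla_\theta \log P_\theta(y_t \mid y_{<t})\big] - h_{t-1}(y_{<t}) \cdot \E_{y_t \mid y_{<t}}\big[\nabla_\theta \log P_\theta(y_t \mid y_{<t})\big] = 0.
\end{equation*}
Taking the outer expectation over $y_{<t}$ then gives zero.

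\textbf{Conclusion.} The displayed quantity is exactly the $t$-th summand in \Cref{thm:gradient}. So position $t$ contributes nothing to $\nabla_\theta \E[\Harm(y)]$, which proves the "hence" clause.

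\textbf{Main obstacle.} The only delicate point is the passage from "almost surely over the joint law" to "for almost every $y_{<t}$, the equality holds for conditionally almost every $y_t$". I would handle it by disintegration. The set $\{\Delta_t \neq 0\}$ has $P_\theta$-measure zero. Writing this measure as $\E_{y_{<t}}\big[P_\theta(\Delta_t \neq 0 \mid y_{<t})\big]$ shows that the conditional probability vanishes for almost every prefix. Exceptional prefixes form a null set and do not affect the outer expectation.

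An alternative route avoids null-set bookkeeping altogether. Because $\E[\nabla_\theta \log P_\theta(y_t\mid y_{<t}) \mid y_{<t}] = 0$, the $t$-th term equals $\E[\Delta_t \, \nabla_\theta \log P_\theta(y_t \mid y_{<t})]$. By Cauchy--Schwarz its norm is at most $\sqrt{I_t}\,\sqrt{\E\|\nabla_\theta \log P_\theta\|^2}$. This vanishes because $I_t = 0$, assuming only that the score has finite second moment, which is standard for softmax policies over a finite vocabulary.
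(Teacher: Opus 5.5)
Your proof is correct and follows essentially the paper's route. The paper deduces from $I_t = \E[\Var(h_t \mid y_{<t})] = 0$ that $h_t$ is almost surely constant given $y_{<t}$, which is the same fact as your $\Delta_t = 0$ a.s.\ (since $\Var(h_t \mid y_{<t}) = \E[\Delta_t^2 \mid y_{<t}]$), and then notes that the conditional covariance with any function of $y_t$ vanishes; your disintegration remark spells out a null-set step the paper leaves implicit, and your Cauchy--Schwarz alternative is the $I_t = 0$ case of the paper's later gradient bound (\Cref{thm:cs-bound}).
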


\begin{proof}
If $I_t = 0$, then $\Var(h_t \mid y_{<t}) = 0$ almost surely, since $I_t = \E[\Var(h_t \mid y_{<t})]$ and variance is non-negative. A random variable with zero conditional variance is almost surely constant given the conditioning. Thus $h_t(y_{\leq t})$ does not depend on $y_t$ given $y_{<t}$, so the covariance with any function of $y_t$ is zero.
\end{proof}

\begin{remark}[Why Alignment is Shallow]
\Cref{thm:zero-gradient} explains why standard alignment produces shallow models. If harm is typically determined by early tokens (e.g. whether the response begins with a refusal) then:
\begin{enumerate}
    \item Positions beyond the harm horizon receive zero gradient.
    \item Alignment pressure concentrates on early positions.
    \item Later positions remain at the base model distribution.
\end{enumerate}
This is \emph{optimal} behavior given the objective as there is no signal to align later positions.
\end{remark}

\section{Gradient Magnitude Scales with Harm Information}
\label{sec:bounds}

Even when harm is not sharply determined by an early prefix, positions with low harm information receive weak gradient signal. We now make this precise.

\begin{definition}[Conditional Fisher Information]
\label{def:fisher}
The Fisher information at position $t$, conditional on prefix $y_{<t}$, is:
\begin{equation}
    \begin{aligned}
    F_t(y_{<t}; \theta) &:= \E_{y_t \mid y_{<t}}\Big[ \nabla_\theta \log P_\theta(y_t \mid y_{<t}) \\
    &\qquad\qquad\quad \cdot \nabla_\theta \log P_\theta(y_t \mid y_{<t})^\top \Big]
    \end{aligned}
\end{equation}
We write $\bar{F}_t(\theta) := \E_{y_{<t}}[F_t(y_{<t}; \theta)]$ for the marginalized Fisher information.
\end{definition}

\begin{lemma}[Pointwise Cauchy-Schwarz Bound]
\label{lem:pointwise-cs}
For any fixed prefix $y_{<t}$, let $C_t(y_{<t}) := \Cov_{y_t \mid y_{<t}}(h_t, \nabla_\theta \log P_\theta(y_t \mid y_{<t}))$. Then:
\begin{equation}
    \left\| C_t(y_{<t}) \right\|^2 \leq \Var_{y_t \mid y_{<t}}(h_t) \cdot \text{tr}(F_t(y_{<t}; \theta))
\end{equation}
\end{lemma}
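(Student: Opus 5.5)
The plan is to fix the prefix $y_{<t}$ and work entirely under the conditional distribution $P_\theta(\cdot \mid y_{<t})$ of $y_t$. Write $s := \nabla_\theta \log P_\theta(y_t \mid y_{<t}) \in \mathbb{R}^d$ for the score vector. The first step uses the fact, already invoked in the proof of \Cref{thm:gradient}, that $\E_{y_t \mid y_{<t}}[s] = 0$. Consequently the covariance reduces to a single centered expectation:
\begin{equation}
C_t(y_{<t}) = \E_{y_t \mid y_{<t}}\big[(h_t - \bar h)\, s\big], \qquad \bar h := \E_{y_t \mid y_{<t}}[h_t] = h_{t-1}(y_{<t}).
\end{equation}
Only the scalar $h_t$ needs centering; the vector $s$ already has mean zero.

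Next, I handle the vector-valued covariance coordinate by coordinate. For each coordinate $i = 1, \ldots, d$, write $s_i$ for the $i$-th entry of $s$. The scalar Cauchy--Schwarz inequality in $L^2(P_\theta(\cdot \mid y_{<t}))$ gives
\begin{equation}
\big(C_t(y_{<t})_i\big)^2 = \Big(\E\big[(h_t - \bar h)\, s_i\big]\Big)^2 \le \E\big[(h_t - \bar h)^2\big]\, \E\big[s_i^2\big] = \Var_{y_t \mid y_{<t}}(h_t)\, \E\big[s_i^2\big].
\end{equation}
Summing over $i$, the left side becomes $\|C_t(y_{<t})\|^2$. On the right side, $\sum_i \E[s_i^2] = \E[\|s\|^2] = \operatorname{tr}\,\E[s s^\top] = \operatorname{tr}(F_t(y_{<t};\theta))$ by \Cref{def:fisher}. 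This yields the claimed bound.

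An alternative route is the variational one: write $\|C_t\| = \sup_{\|u\| = 1} u^\top C_t$ and apply Cauchy--Schwarz to $\E[(h_t - \bar h)\, u^\top s]$. That version gives the sharper bound $\Var(h_t)\, \lambda_{\max}(F_t)$, and since $\lambda_{\max}(F_t) \le \operatorname{tr}(F_t)$ it also implies the statement. I would keep the coordinatewise argument because it lands directly on the trace.

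There is no serious obstacle here. The only care needed is integrability: $h_t \in [0,1]$ is bounded, so its variance is finite, and if the Fisher trace is infinite the inequality holds trivially. I would also note explicitly that the mean-zero property of the score (interchanging $\nabla_\theta$ with a finite sum over $\cV$) is what allows the covariance to be written as an uncentered product expectation in the score.
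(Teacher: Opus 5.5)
Your proof is correct and follows the paper's argument: apply Cauchy--Schwarz between $h_t$ and each score coordinate $s_t^{(j)}$, sum over $j$, and use the zero mean of the score to identify $\sum_j \E[(s_t^{(j)})^2]$ with $\operatorname{tr}(F_t(y_{<t};\theta))$. The only cosmetic difference is that you center $h_t$, whereas the paper uses $\Var(s_t^{(j)}) = \E[(s_t^{(j)})^2]$; your side remark that the same argument gives the sharper $\lambda_{\max}(F_t)$ bound is also valid.
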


\begin{proof}
By Cauchy-Schwarz applied to each coordinate:
\begin{equation}
    \left| \Cov(h_t, s_t^{(j)}) \right|^2 \leq \Var(h_t) \cdot \Var(s_t^{(j)})
\end{equation}
where $s_t^{(j)} = \partial_{\theta_j} \log P_\theta(y_t \mid y_{<t})$. Summing over coordinates $j$:
\begin{equation}
    \begin{aligned}
    \sum_j \left| \Cov(h_t, s_t^{(j)}) \right|^2 &\leq \Var(h_t) \cdot \sum_j \Var(s_t^{(j)}) \\
    &= \Var(h_t) \cdot \text{tr}(F_t(y_{<t}))
    \end{aligned}
\end{equation}
where the last equality uses $\E[s_t^{(j)}] = 0$ (the score has zero mean).
\end{proof}

\begin{theorem}[Gradient Bound]
\label{thm:cs-bound}
The gradient contribution from position $t$ satisfies:
\begin{equation}
    \left\| G_t(\theta) \right\|^2 \leq \E_{y_{<t}}\left[ \Var_{y_t \mid y_{<t}}(h_t) \cdot \text{tr}(F_t(y_{<t}; \theta)) \right]
\end{equation}
where $G_t(\theta) := \E_{y_{<t}}[C_t(y_{<t})]$ is the gradient contribution defined in \Cref{thm:gradient}.

In particular, if the Fisher information is uniformly bounded by $\text{tr}(F_t(y_{<t})) \leq \bar{F}$ for all prefixes, then:
\begin{equation}
    \left\| G_t(\theta) \right\|^2 \leq I_t \cdot \bar{F}
\end{equation}
\end{theorem}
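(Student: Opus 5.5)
The plan is to pass from the pointwise bound of \Cref{lem:pointwise-cs} to the marginalized quantity $G_t(\theta)=\E_{y_{<t}}[C_t(y_{<t})]$ by Jensen's inequality. The map $v\mapsto\|v\|^2$ is convex, so
\begin{equation}
    \|G_t(\theta)\|^2=\big\|\E_{y_{<t}}[C_t(y_{<t})]\big\|^2\le \E_{y_{<t}}\big[\|C_t(y_{<t})\|^2\big].
\end{equation}
Applying \Cref{lem:pointwise-cs} to each prefix $y_{<t}$ and taking expectations gives the first inequality of the theorem:
\begin{equation}
    \|G_t(\theta)\|^2\le \E_{y_{<t}}\big[\Var_{y_t\mid y_{<t}}(h_t)\,\text{tr}(F_t(y_{<t};\theta))\big].
\end{equation}
Integrability is not an issue. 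The harm takes values in $[0,1]$, so $h_t\in[0,1]$ and its conditional variance is at most $1/4$. For the trace term we assume the Fisher information is finite, which is implicit in the statement.

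For the specialization, we assume $\text{tr}(F_t(y_{<t}))\le\bar F$ for every prefix. Since the conditional variance is nonnegative, we can pull $\bar F$ out of the expectation, which leaves $\bar F\cdot\E_{y_{<t}}[\Var(h_t\mid y_{<t})]$. It remains to identify this expectation with $I_t$. By \Cref{def:harm-information}, $I_t=\E[\Delta_t^2]$. Condition on $y_{<t}$: the martingale property (\Cref{prop:martingale}) gives $\E[\Delta_t\mid y_{<t}]=0$, and $h_{t-1}$ is fixed given $y_{<t}$. Together these give $\E[\Delta_t^2\mid y_{<t}]=\Var(h_t\mid y_{<t})$, and the tower property then yields $\E_{y_{<t}}[\Var(h_t\mid y_{<t})]=I_t$. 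This proves $\|G_t\|^2\le I_t\bar F$.

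There is no real obstacle here; the argument is routine. The one point that needs care is the Jensen step. The bound is on the squared norm of an \emph{averaged} covariance, and we must not confuse it with the average of squared norms. Jensen's inequality moves in the correct direction for an upper bound, so this works out.
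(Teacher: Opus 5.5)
Your proposal is correct and follows the paper's proof essentially step for step: Jensen's inequality for the convex map $v \mapsto \|v\|^2$, then the pointwise Cauchy--Schwarz lemma inside the outer expectation, then pulling out the uniform Fisher-trace bound. The only difference is that you derive $\E_{y_{<t}}[\Var(h_t\mid y_{<t})] = I_t$ from the martingale property, whereas the paper reads this identity directly off the definition of $I_t$.
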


\begin{proof}
By Jensen's inequality (since $\|\cdot\|^2$ is convex):
\begin{equation}
    \|G_t\|^2 = \left\| \E_{y_{<t}}[C_t(y_{<t})] \right\|^2 \leq \E_{y_{<t}}\left[ \|C_t(y_{<t})\|^2 \right]
\end{equation}
Applying \Cref{lem:pointwise-cs} to each term in the expectation:
\begin{equation}
    \leq \E_{y_{<t}}\left[ \Var_{y_t \mid y_{<t}}(h_t) \cdot \text{tr}(F_t(y_{<t})) \right]
\end{equation}
For the second claim, if $\text{tr}(F_t(y_{<t})) \leq \bar{F}$ uniformly:
\begin{equation}
    \leq \bar{F} \cdot \E_{y_{<t}}\left[ \Var_{y_t \mid y_{<t}}(h_t) \right] = \bar{F} \cdot I_t
\end{equation}
\end{proof}

\begin{remark}[Qualitative Conclusion]
The key insight is that $\|G_t\| = O(\sqrt{I_t})$: gradient magnitude scales with the square root of harm information. Positions where $I_t \approx 0$ receive negligible gradient signal, regardless of the Fisher information. This provides a smooth interpolation between the sharp case (\Cref{thm:zero-gradient}, where $I_t = 0$ exactly) and the general case.
\end{remark}

\begin{remark}[Tightness]
The pointwise bound (\Cref{lem:pointwise-cs}) is tight when the score is proportional to the harm innovation:
\begin{equation}
    \nabla_\theta \log P_\theta(y_t \mid y_{<t}) \propto h_t(y_{\leq t}) - \E[h_t \mid y_{<t}]
\end{equation}
The bound in \Cref{thm:cs-bound} may be loose due to Jensen's inequality if $C_t(y_{<t})$ varies in direction across different prefixes.
\end{remark}

\section{Equilibrium Analysis}
\label{sec:equilibrium}

We now characterize the equilibrium of the alignment objective \eqref{eq:alignment-objective}.

\begin{theorem}[Equilibrium Characterization]
\label{thm:equilibrium}
Assume the alignment objective $\cH(\theta) = \lambda \E[\Harm(y)] + D_{\KL}(P_\theta \| P_{\text{base}})$ is minimized at $\theta^*$, and that $P_{\theta^*}$ is in an exponential family with the Fisher information matrix $\bar{F}_t$ positive definite at the base model. Then for small $\lambda$, the per-position KL divergence satisfies:
\begin{equation}
    \begin{aligned}
    D_{\KL}^{(t)}(\theta^*) &:= \E_{y_{<t}}\left[ D_{\KL}(P_{\theta^*}(\cdot \mid y_{<t}) \| P_{\text{base}}(\cdot \mid y_{<t})) \right] \\
    &= O(\lambda^2 I_t)
    \end{aligned}
\end{equation}

\end{theorem}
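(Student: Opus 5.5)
Since the objective is small in $\lambda$, I would perturb around the base model: write $\theta^* = \theta_0 + \delta$, where $P_{\theta_0} = P_{\text{base}}$, and expand both terms of $\cH$ to second order in $\delta$. The KL term is stationary at $\theta_0$, so $D_{\KL}(P_\theta \| P_{\text{base}}) = \tfrac12 \delta^\top \bar F \delta + O(\|\delta\|^3)$. By the autoregressive chain rule for KL, the total Fisher matrix is $\bar F = \sum_t \bar F_t$, with $\bar F_t$ as in \Cref{def:fisher}. In the same way, the per-position KL satisfies $D_{\KL}^{(t)} = \tfrac12 \delta^\top \bar F_t \delta + O(\|\delta\|^3)$. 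The harm term is linear to leading order: $\lambda \E[\Harm] = \lambda \E_{\theta_0}[\Harm] + \lambda \, G(\theta_0)^\top \delta + O(\lambda\|\delta\|^2)$, where $G = \sum_t G_t$ by \Cref{thm:gradient}. Setting the gradient of the quadratic model to zero gives $\delta = -\lambda \bar F^{-1} G + O(\lambda^2)$. Positive definiteness of the Fisher matrix, together with the exponential-family smoothness assumption, makes this stationary point well defined and lets us control the remainders. The implicit function theorem applied to the first-order condition justifies that the true minimizer $\theta^*$ lies within $O(\lambda^2)$ of this point.

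To get position-specific control, I would use the exponential-family structure in per-position form. I would treat the parameters governing the conditional at position $t$ as a block $\theta_t$, so that $\bar F$ is block diagonal with blocks $\bar F_t$ and $G_t$ depends only on $\theta_t$. This is the natural reading of ``parameters governing position $t$'' in the opening of \Cref{sec:gradient}. The first-order condition then decouples as $\delta_t = -\lambda \bar F_t^{-1} G_t + O(\lambda^2)$. Substituting gives
\begin{equation}
D_{\KL}^{(t)}(\theta^*) = \tfrac{\lambda^2}{2}\, G_t^\top \bar F_t^{-1} G_t + O(\lambda^3).
\end{equation}

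It remains to bound $G_t^\top \bar F_t^{-1} G_t$ by a constant times $I_t$. I would redo the Cauchy--Schwarz argument of \Cref{lem:pointwise-cs} in the Fisher metric rather than using \Cref{thm:cs-bound} directly. For any vector $v$, write $s_t = \nabla_\theta \log P_\theta(y_t \mid y_{<t})$. Then
\begin{equation}
(v^\top G_t)^2 = \big(\E[(h_t - h_{t-1})\, v^\top s_t]\big)^2 \le \E[\Delta_t^2]\,\E[(v^\top s_t)^2] = I_t \, v^\top \bar F_t v .
\end{equation}
Taking the supremum over $v$ gives $G_t^\top \bar F_t^{-1} G_t \le I_t$. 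This is cleaner than the trace bound and needs no uniform Fisher bound, so $D_{\KL}^{(t)} \le \tfrac{\lambda^2}{2} I_t + O(\lambda^3)$. Here $I_t$ is evaluated at the base model, which differs from its value at $\theta^*$ only by $O(\lambda)$.

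The main obstacle is the remainder term. The claim $O(\lambda^2 I_t)$ asks that the $O(\lambda^3)$ correction also be proportional to $I_t$, not merely small. To handle this, I would show that when $I_t = 0$ the first-order condition for block $t$ has the exact solution $\delta_t = 0$. By \Cref{thm:zero-gradient}, the harm gradient in block $t$ then vanishes identically along the path, not just at the base model. Since the KL in block $t$ is minimized at $\delta_t = 0$, we get $D^{(t)}_{\KL} = 0$ exactly. For general $I_t$, smooth dependence of the block solution on $G_t$ then gives a remainder of order $\lambda^3 \|G_t\|$, which is dominated by the main term in the small-$\lambda$ regime. If full decoupling fails because parameters are shared across positions, I would fall back to stating the result with the cross-block terms of $\bar F^{-1}$ in the constant. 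I expect that non-block-diagonal case to be the genuinely delicate point.
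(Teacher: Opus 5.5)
Up to $D_{\KL}^{(t)}(\theta^*)=\frac{\lambda^2}{2}G_t^\top\bar F_t^{-1}G_t+O(\lambda^3)$, your argument is the paper's. It uses per-position parameter blocks, the Fisher--Hessian expansion of the KL, and the block first-order condition solved to leading order.

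The genuine difference is the final bound. The paper invokes \Cref{thm:cs-bound}, $\|G_t\|^2\le\bar F\,I_t$. That bound has three costs:
\begin{itemize}
\item it needs a uniform bound on $\mathrm{tr}(F_t)$, which is not among the theorem's hypotheses;
\item it loses slack through Jensen;
\item it tacitly needs a factor $\lambda_{\min}(\bar F_t)^{-1}$ to pass from $\|G_t\|^2$ to $G_t^\top\bar F_t^{-1}G_t$.
\end{itemize}
You instead apply Cauchy--Schwarz in the Fisher metric to $G_t=\E[\Delta_t s_t]$ under the joint law of $y_{\le t}$. This identifies $G_t^\top\bar F_t^{-1}G_t$ as the squared $L^2$ norm of the projection of $\Delta_t$ onto the span of the score components, so it is at most $I_t$. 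You therefore get $D_{\KL}^{(t)}(\theta^*)\le\frac{\lambda^2}{2}I_t+O(\lambda^3)$ with an explicit constant and only the stated assumptions. For a saturated per-prefix softmax family the projection is $\Delta_t$ itself, so the relation is a leading-order equality rather than just a bound.

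Two of your side claims need repair.

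\textbf{Exact vanishing.} This is right for $t>k$, and is cleanest argued globally. Under the horizon, $\E[\Harm]$ depends only on the blocks up to $k$. So the objective is a function of those blocks plus $\sum_{s>k}D_{\KL}^{(s)}\ge 0$, which is minimized with every later block at the base.

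However, \Cref{thm:zero-gradient} says nothing about a position where $I_t$ vanishes only at $\theta_{\mathrm{base}}$. There your remainder $O(\lambda^3\|G_t\|)$ is false, because $G_t(\theta^*)$ can be of order $\lambda$ while $G_t(\theta_{\mathrm{base}})=0$. Here is a counterexample within the theorem's hypotheses:
\begin{itemize}
\item take $T=2$, with $y_1,y_2$ i.i.d.\ uniform bits under the base;
\item use a saturated per-position family, so the optimum is the Gibbs tilt $P^*\propto P_{\mathrm{base}}e^{-\lambda\Harm}$;
\item set $\Harm=1/2$ if $y_1=0$, and $\Harm=\mathbf{1}[y_2=1]$ if $y_1=1$.
\end{itemize}
Then $h_1\equiv 1/2$, so $I_1=0$ and $G_1(\theta_{\mathrm{base}})=0$. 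Yet $P^*(y_1=1)/P^*(y_1=0)=\cosh(\lambda/2)$, giving $D_{\KL}^{(1)}(\theta^*)\approx\lambda^4/512>0$. The optimum favours the branch that position~2 can still steer. So what is actually established is $\frac{\lambda^2}{2}I_t+O(\lambda^3)$ with $I_t$ at the base. This is also all the paper's argument establishes, with a worse constant.

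\textbf{Shared-parameter fallback.} This cannot be rescued by adjusting constants. With shared parameters, $D_{\KL}^{(t)}$ is driven by $\sum_s G_s$ and can be positive when $I_t=0$ (see \eqref{eq:coupled-kl}). The decoupling hypothesis is therefore essential to the statement, not a technical convenience.
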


\begin{proof}
At a local minimum, the first-order condition for position $t$ is:
\begin{equation}
    \lambda G_t(\theta^*) + \nabla_\theta D_{\KL}^{(t)}(\theta^*) = 0
\end{equation}
For an exponential family, the KL divergence from the base model satisfies $D_{\KL}^{(t)}(\theta) = \frac{1}{2}(\theta - \theta_{\text{base}})^\top \bar{F}_t (\theta - \theta_{\text{base}}) + O(\|\theta - \theta_{\text{base}}\|^3)$ near $\theta_{\text{base}}$. The gradient is $\nabla_\theta D_{\KL}^{(t)} = \bar{F}_t(\theta - \theta_{\text{base}}) + O(\|\theta - \theta_{\text{base}}\|^2)$.

For small $\lambda$, we have $\theta^* = \theta_{\text{base}} + O(\lambda)$. Substituting into the first-order condition and solving to leading order:
\begin{equation}
    \theta^* - \theta_{\text{base}} = -\lambda \bar{F}_t^{-1} G_t(\theta_{\text{base}}) + O(\lambda^2)
\end{equation}
Thus:
\begin{equation}
    D_{\KL}^{(t)}(\theta^*) = \frac{\lambda^2}{2} G_t^\top \bar{F}_t^{-1} G_t + O(\lambda^3)
\end{equation}
By \Cref{thm:cs-bound}, $\|G_t\|^2 \leq \bar{F} \cdot I_t$ when the Fisher information is bounded, so $D_{\KL}^{(t)} = O(\lambda^2 I_t)$.
\end{proof}

\begin{corollary}[KL Tracks Harm Information]
Positions with $I_t \approx 0$ have $D_{\KL}^{(t)} \approx 0$: they remain close to the base model.
\end{corollary}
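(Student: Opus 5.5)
The corollary is a direct specialization of \Cref{thm:equilibrium}, so I would simply make the hidden constant in $D_{\KL}^{(t)}(\theta^*) = O(\lambda^2 I_t)$ explicit and then let $I_t \to 0$.

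\textbf{Making the constant explicit.} From the proof of \Cref{thm:equilibrium}, the leading-order expansion gives
\[
D_{\KL}^{(t)}(\theta^*) = \tfrac{\lambda^2}{2} G_t^\top \bar{F}_t^{-1} G_t + O(\lambda^3).
\]
I would bound the quadratic form by $\|G_t\|^2 / \mu_{\min}(\bar{F}_t)$, where $\mu_{\min}(\bar{F}_t)$ is the smallest eigenvalue of $\bar F_t$. It is strictly positive by the positive-definiteness assumption of \Cref{thm:equilibrium}. Then I would invoke the second claim of \Cref{thm:cs-bound}, $\|G_t\|^2 \le \bar{F}\, I_t$, under the uniform Fisher-trace bound. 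This yields
\[
D_{\KL}^{(t)}(\theta^*) \le \frac{\lambda^2 \bar F}{2\mu_{\min}(\bar F_t)}\, I_t + O(\lambda^3).
\]
So the leading term is linear in $I_t$, and as $I_t \to 0$ it vanishes continuously.

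\textbf{The sharp case $I_t = 0$.} Here I would appeal to \Cref{thm:zero-gradient}, which gives $G_t = 0$ exactly. In that case the first-order condition for the parameters governing position $t$ reduces to $\nabla_\theta D_{\KL}^{(t)} = 0$, whose solution is $\theta = \theta_{\text{base}}$ for those parameters. The per-position KL is then $0$.

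\textbf{Main obstacle.} The delicate step is the $O(\lambda^3)$ remainder, which is not a priori proportional to $I_t$. It depends on higher-order terms, and possibly on coupling through parameters shared across positions. I would state the corollary at leading order in $\lambda$: the remainder is negligible for small $\lambda$, so ``$\approx 0$'' means $D_{\KL}^{(t)} = O(\lambda^2 I_t) + o(\lambda^2)$.

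To remove the remainder as well, I would try a stronger assumption: that each position has its own parameter block, as in a tabular or exponential-family parameterization. Under that assumption, in the case $I_t = 0$ the objective's dependence on block $t$ is exactly the KL term. Its minimizer is then exactly the base conditional, which gives $D_{\KL}^{(t)} = 0$ to all orders.
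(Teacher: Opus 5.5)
Your main line is the paper's own route. The corollary is read off from \Cref{thm:equilibrium}, whose proof is exactly the expansion $D_{\KL}^{(t)}(\theta^*) = \frac{\lambda^2}{2} G_t^\top \bar{F}_t^{-1} G_t + O(\lambda^3)$ followed by $\|G_t\|^2 \le \bar{F}\, I_t$ from \Cref{thm:cs-bound}. Your explicit eigenvalue constant, and your warning that the $O(\lambda^3)$ remainder is not controlled by $I_t$, are fair refinements of what the paper leaves implicit.

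The one overreach is the closing ``to all orders'' claim. It is right for $t > k$, where harm is independent of block $t$ and all later blocks also return to the base. It fails for an isolated position with $I_t = 0$, because block $t$ also enters every later $D_{\KL}^{(s)}$ through the law of $y_{<s}$. Concretely, the exact tabular optimum is $P^*(y_t \mid y_{<t}) \propto P_{\text{base}}(y_t \mid y_{<t})\, \E_{P_{\text{base}}}[e^{-\lambda \Harm} \mid y_{\le t}]$. This tilt can depend on $y_t$ at order $\lambda^2$ through higher moments of harm even when $h_t$ does not, so $D_{\KL}^{(t)}$ can be of order $\lambda^4$ rather than exactly zero.
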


This provides theoretical grounding for the empirical observation of \citet{qi2025safety} that the KL divergence profile mirrors the harm information profile.

\section{Deep Alignment via Recovery Penalties}
\label{sec:deep}

We established that standard objectives cannot produce deep alignment when harm has a short horizon. To achieve depth, we modify the objective to create gradient signal at all positions. We introduce the deep alignment objective, derive an exact equilibrium characterization paralleling \Cref{thm:equilibrium}, and prove quantitative robustness guarantees against prefilling attacks.

\begin{definition}[Recovery Tokens and Failure]
Let $\cR \subset \cV$ be a set of recovery tokens tokens whose presence at position $t$ indicates the model is attempting to redirect toward safe behavior (e.g. tokens encoding ``I'', ``Sorry'', ``I cannot''). The recovery failure at position $t$ is $\Fail_t(\theta, y_{<t}) := 1 - P_\theta(y_t \in \cR \mid y_{<t})$.
\end{definition}

\begin{definition}[Deep Alignment Objective]
\label{def:deep-objective}
The deep alignment objective is:
\begin{equation}
    \begin{aligned}
    \cH_{\text{deep}}(\theta) &= \lambda \E_{y \sim P_\theta}[\Harm(y)] \\
    &\quad + \mu \E_{y \sim Q}\left[ \sum_{t=1}^{T} \gamma^{t-1} \Fail_t(\theta, y_{<t}) \right] \\
    &\quad + D_{\KL}(P_\theta \| P_{\text{base}})
    \end{aligned}
\end{equation}
where $Q$ is an adversarial prefix distribution, $\gamma \in (0, 1]$ is a discount factor, and $\mu > 0$ controls recovery penalty strength.
\end{definition}

\begin{definition}[Recovery Information]
\label{def:recovery-info}
The \emph{recovery information} at position $t$ is
\begin{equation}
    J_t(\theta) := \E_{y_{<t} \sim Q}\big[p_t(1 - p_t)\big],
\end{equation}
where $p_t := P_\theta(y_t \in \cR \mid y_{<t})$. This is the analogue of harm information $I_t$ for the recovery penalty: it measures how much the choice of $y_t$ affects recovery.
\end{definition}

The recovery penalty has the same covariance-with-score structure as the harm gradient in \Cref{thm:gradient}. Define the \emph{recovery gradient} $\tilde{G}_t(\theta) := -\E_{y_{<t} \sim Q}[\Cov_{y_t \mid y_{<t}}(\mathbf{1}[y_t \in \cR],\, \nabla_\theta \log P_\theta(y_t \mid y_{<t}))]$. The Cauchy--Schwarz machinery of \Cref{sec:bounds} transfers directly: $\|\tilde{G}_t\|^2 \leq \bar{F} \cdot J_t$ (see Appendix \Cref{app:deep-proofs} for the precise bound).

\subsection{Deep Equilibrium Characterization}

\begin{theorem}[Deep Equilibrium]
\label{thm:deep-equilibrium}
Under the same assumptions as \Cref{thm:equilibrium}, for small $\lambda$ and $\mu$ the minimizer $\theta^*$ of $\cH_{\mathrm{deep}}$ satisfies:
\begin{equation}
    D_{\KL}^{(t)}(\theta^*) = \frac{1}{2}\left\|\lambda G_t + \mu\gamma^{t-1}\tilde{G}_t\right\|_{\bar{F}_t^{-1}}^2 + O\big((\lambda + \mu)^3\big),
    \label{eq:deep-equil}
\end{equation}
where $\|v\|_M^2 := v^\top M v$. In particular:
\begin{enumerate}
    \item[\textup{(i)}] For $t > k$ (beyond the harm horizon), $G_t = 0$ by \Cref{thm:zero-gradient}, so
    \begin{equation}
        D_{\KL}^{(t)}(\theta^*) = \frac{\mu^2 \gamma^{2(t-1)}}{2}\|\tilde{G}_t\|_{\bar{F}_t^{-1}}^2 + O(\mu^3).
    \end{equation}
    This is generically positive wherever $Q$ has support and $J_t > 0$.
    \item[\textup{(ii)}] Setting $\mu = 0$ recovers $D_{\KL}^{(t)} = O(\lambda^2 I_t)$ from \Cref{thm:equilibrium}.
\end{enumerate}
\end{theorem}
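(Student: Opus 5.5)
The plan is to mirror the perturbative argument used for \Cref{thm:equilibrium}, now applied to the three-term objective $\cH_{\mathrm{deep}}$. First I compute the gradient of the recovery penalty. Since $Q$ does not depend on $\theta$, differentiation passes inside the $Q$-expectation and only $\Fail_t(\theta, y_{<t}) = 1 - \E_{y_t\sim P_\theta(\cdot\mid y_{<t})}[\mathbf{1}[y_t\in\cR]]$ needs differentiating. The score-function identity gives
\[
\nabla_\theta \Fail_t = -\E_{y_t\mid y_{<t}}\big[\mathbf{1}[y_t\in\cR]\,\nabla_\theta\log P_\theta(y_t\mid y_{<t})\big],
\]
and this is a covariance because the score has zero conditional mean. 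Averaging over $Q$ then yields exactly $\tilde G_t(\theta)$. Together with \Cref{thm:gradient} for the harm term, the gradient of the non-KL part becomes $\sum_t \big(\lambda G_t + \mu\gamma^{t-1}\tilde G_t\big)$. I will attribute the $t$-th summand to position $t$, in the same per-position bookkeeping used in the proof of \Cref{thm:equilibrium}.

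Next I write the first-order condition at the minimizer, position by position:
\[
\lambda G_t(\theta^*) + \mu\gamma^{t-1}\tilde G_t(\theta^*) + \nabla_\theta D_{\KL}^{(t)}(\theta^*) = 0 .
\]
The exponential-family assumption gives the quadratic expansion $D_{\KL}^{(t)}(\theta)=\tfrac12(\theta-\theta_{\text{base}})^\top\bar F_t(\theta-\theta_{\text{base}})+O(\|\theta-\theta_{\text{base}}\|^3)$, and $\bar F_t$ is positive definite. An implicit-function argument in the two small parameters $(\lambda,\mu)$ then shows $\theta^*-\theta_{\text{base}}=O(\lambda+\mu)$. Solving to leading order gives
\[
\theta^*-\theta_{\text{base}} = -\bar F_t^{-1}\big(\lambda G_t+\mu\gamma^{t-1}\tilde G_t\big) + O\big((\lambda+\mu)^2\big),
\]
with $G_t,\tilde G_t$ evaluated at $\theta_{\text{base}}$. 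The error from evaluating them at $\theta^*$ instead is $O(\lambda+\mu)$ times the coefficients, so it is absorbed into the remainder. Substituting this back into the quadratic expansion gives \eqref{eq:deep-equil}: the leading term is $\tfrac12\|\lambda G_t+\mu\gamma^{t-1}\tilde G_t\|^2_{\bar F_t^{-1}}$, and the cross terms between the linear correction and the $O((\lambda+\mu)^2)$ correction are $O((\lambda+\mu)^3)$.

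The two special cases then follow directly. For (i), when $t>k$, \Cref{thm:zero-gradient} gives $G_t=0$ at every $\theta$, in particular at $\theta_{\text{base}}$. Only the $\mu$ term survives, and the remainder becomes $O(\mu^3)$ since all $\lambda$-dependence at that position disappears. Positivity holds generically: $\|\tilde G_t\|_{\bar F_t^{-1}}^2>0$ whenever $\tilde G_t\neq 0$ because $\bar F_t^{-1}\succ0$. By the Cauchy--Schwarz transfer, $\tilde G_t$ can be nonzero only when $J_t>0$, and it is nonzero unless the score happens to be orthogonal to the recovery indicator. I will state this as a genericity condition rather than prove it. For (ii), setting $\mu=0$ reduces to the formula in the proof of \Cref{thm:equilibrium}, and \Cref{thm:cs-bound} then gives $O(\lambda^2 I_t)$.

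The main obstacle is justifying the per-position decoupling. The parameters $\theta$ are shared across positions, so the first-order condition is really one vector equation summing all positions, not $T$ separate ones. I will handle this exactly as \Cref{thm:equilibrium} implicitly does: assume each position's conditional is governed by its own block of parameters, or equivalently that $\bar F_t$ and the gradients act on orthogonal blocks. Under that assumption the global stationarity equation splits blockwise. I will state this assumption explicitly as part of ``the same assumptions.'' A secondary point is the uniformity of the $O(\cdot)$ remainders in two parameters, which needs $\lambda/\mu$ bounded or a joint Taylor expansion. I will use the joint expansion in $(\lambda,\mu)$ with total degree.
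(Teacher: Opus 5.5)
Your proposal is correct and follows the paper's proof essentially step for step: you identify the $Q$-averaged gradient of $\Fail_t$ with $\tilde G_t$ via the score-function identity, write the per-position first-order condition under decoupled parameters, solve it to leading order with the Fisher quadratic KL expansion and substitute back, and obtain cases (i)--(ii) from \Cref{thm:zero-gradient} and \Cref{thm:cs-bound}. Your explicit block-decoupling assumption and joint $(\lambda,\mu)$ expansion make precise what the paper leaves implicit, and you are right to require bounded $\lambda/\mu$ for the $O(\mu^3)$ remainder in (i); note, though, that ``all $\lambda$-dependence disappears'' overstates it, because the prefix average defining $D_{\KL}^{(t)}$ and $\bar F_t$ still depends on $\theta^*_{<t}$ and hence on $\lambda$.
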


\begin{proof}[Proof sketch]
The recovery gradient $\nabla_\theta R_t(\theta) = \tilde{G}_t(\theta)$ follows from the same score-function identity used in \Cref{thm:gradient}, since $Q$ does not depend on $\theta$. With decoupled parameters, the first-order condition at position $t$ becomes $\lambda G_t + \mu\gamma^{t-1}\tilde{G}_t + \bar{F}_t(\theta^* - \theta_{\mathrm{base}}) = 0$ to leading order. Solving and substituting into the quadratic KL expansion yields \eqref{eq:deep-equil}. The full proof is in App. \Cref{app:deep-proofs}.
\end{proof}

The discount factor $\gamma$ controls the depth profile: $\gamma = 1$ produces uniform recovery KL across positions, while $\gamma < 1$ induces geometric decay $\gamma^{2(t-1)}$. The total recovery KL beyond the horizon scales as $O(\mu^2/(1-\gamma^2))$, diverging as $\gamma \to 1$---formalizing the tradeoff between alignment depth and capability preservation.

\subsection{Exact Recovery Probability and Robustness}

The perturbative analysis above requires small $\mu$. For positions beyond the harm horizon, the deep objective admits an exact solution in distribution space.

\begin{theorem}[Exact Recovery Probability]
\label{thm:exact-recovery}
For a fixed prefix $y_{<t}$ with $t > k$, let $p_0 := P_{\mathrm{base}}(y_t \in \cR \mid y_{<t})$ and $\beta := \mu\gamma^{t-1}$. The optimal conditional distribution is the Gibbs measure
\begin{equation}
    P^*(y_t \mid y_{<t}) \propto P_{\mathrm{base}}(y_t \mid y_{<t}) \exp\big(\beta \cdot \mathbf{1}[y_t \in \cR]\big),
\end{equation}
with optimal recovery probability and KL cost:
\begin{align}
    P^*(\cR \mid y_{<t}) &= \sigma\big(\operatorname{logit}(p_0) + \beta\big), \label{eq:sigmoid-shift} \\
    D_{\KL}(P^* \| P_{\mathrm{base}}) &= \beta\, P^*(\cR \mid y_{<t}) - \log Z, \label{eq:kl-cost-main}
\end{align}
where $Z = (1-p_0) + p_0 e^\beta$ and $\sigma(x) = (1+e^{-x})^{-1}$.
\end{theorem}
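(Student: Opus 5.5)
The plan is to reduce the deep objective, for a fixed prefix $y_{<t}$ with $t>k$, to a one-step variational problem over the conditional distribution $q(\cdot) := P(\cdot \mid y_{<t})$ on $\cV$, and then solve it with the Gibbs (Donsker--Varadhan) variational principle. Beyond the harm horizon, \Cref{thm:horizon-equiv} gives $\Harm(y)=g(y_{\leq k})$ almost surely, so the harm term does not depend on $q$. This is the distribution-space counterpart of \Cref{thm:zero-gradient}. I will work in distribution space, assuming the conditionals at different prefixes can be chosen independently. I will also use the chain rule for KL, $D_{\KL}(P\|P_{\mathrm{base}})=\sum_t \E_{y_{<t}}[D_{\KL}(P(\cdot\mid y_{<t})\|P_{\mathrm{base}}(\cdot\mid y_{<t}))]$. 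Under these two facts, the part of $\cH_{\mathrm{deep}}$ that depends on $q$ is, up to a positive weight, $\beta(1-q(\cR)) + D_{\KL}(q\|P_{\mathrm{base}}(\cdot\mid y_{<t}))$ with $\beta=\mu\gamma^{t-1}$.

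One point needs care: the recovery penalty is weighted by $Q(y_{<t})$, while the KL term is weighted by the $P_\theta$-probability of the prefix. I will treat the prefix as given (the prefilling setting, with the per-prefix weights matched), so that the effective coefficient is exactly $\beta$. I will state this normalization explicitly as the modeling convention under which the theorem holds. I expect this to be the main obstacle. Later-position choices also feed back into the KL weights of deeper positions, and to handle this I will argue that those downstream terms do not depend on $q(\cdot)$ at position $t$ except through prefix weights that are fixed in the conditional problem.

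With the reduction in place, I minimize $-\beta\,\E_q[\mathbf{1}[y_t\in\cR]] + D_{\KL}(q\|p)$, where $p=P_{\mathrm{base}}(\cdot\mid y_{<t})$. I use the identity $D_{\KL}(q\|p)-\E_q[f]=D_{\KL}(q\|p_f)-\log Z$, with $p_f\propto p\,e^{f}$ and $Z=\E_p[e^f]$. This shows the unique minimizer is $q^*=p_f$ with $f=\beta\mathbf{1}_\cR$, which is the stated Gibbs measure. Evaluating gives $Z=(1-p_0)+p_0e^\beta$ and $P^*(\cR)=p_0e^\beta/Z$. Dividing numerator and denominator by $1-p_0$ gives $\sigma(\operatorname{logit}(p_0)+\beta)$, which is \eqref{eq:sigmoid-shift}.

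For the KL cost, $\log(q^*/p)=\beta\mathbf{1}_\cR-\log Z$. Taking the expectation under $q^*$ yields $\beta P^*(\cR\mid y_{<t})-\log Z$, which is \eqref{eq:kl-cost-main}. As a sanity check, I will verify that this agrees with \Cref{thm:deep-equilibrium}(i) to second order in $\beta$. Expanding gives $\tfrac{\beta^2}{2}p_0(1-p_0)$, which matches the recovery-information quantity $J_t$.
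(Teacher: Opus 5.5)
Your argument is correct and has the same structure as the paper's proof: reduce to the per-prefix problem $\min_q \beta(1-q(\cR)) + D_{\KL}(q\,\|\,P_{\mathrm{base}}(\cdot\mid y_{<t}))$, identify the exponentially tilted minimizer, then compute $Z$, the sigmoid form and the KL cost directly.

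The one genuine difference is how you solve the pointwise problem. The paper writes a Lagrangian for the normalization constraint, reads the tilt off the stationarity condition, and invokes strict convexity for uniqueness. You use the identity $D_{\KL}(q\|p)-\E_q[f]=D_{\KL}(q\|p_f)-\log Z$. That identity gives global optimality, uniqueness and the optimal value $\beta-\log Z$ in one step. It also does not need $P_{\mathrm{base}}(\cdot\mid y_{<t})$ to have full support in order to justify an interior stationary point. The Lagrangian route is more mechanical, but it needs that extra convexity and interiority bookkeeping.

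Your second-order check is right, and in fact exact. For a single prefix with a full softmax parameterization, $\|\tilde{G}_t\|^2_{F_t^{-1}}=\Var(\mathbf{1}[y_t\in\cR])=p_0(1-p_0)$.

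On the reduction, you are more careful than the paper. The paper simply asserts that the objective decouples by prefix. It addresses the $Q$-versus-$P_\theta$ mismatch only afterwards (\Cref{rem:weighting}), where $\beta$ is replaced by $\mu\gamma^{t-1}q(y_{<t})/p_\theta(y_{<t})$. Your matched-weights convention is exactly what makes the stated $\beta$ correct.

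The downstream terms, however, cannot be argued away as you propose. The KL terms at positions $t'>t$ are weighted by $P(y_{<t'})$, which contains the factor $q(y_t)$, so those weights are not fixed in the conditional problem. In the full sequence-level objective they contribute a term $\E_q[V(y_{\leq t})]$ that is linear in $q$, where $V$ is the cost-to-go. The exact optimum is then proportional to $P_{\mathrm{base}}(y_t\mid y_{<t})\exp\bigl(\beta\,\mathbf{1}[y_t\in\cR]-V(y_{\leq t})\bigr)$, unless $V$ happens to be constant in $y_t$ (e.g.\ at $t=T$). State the per-prefix decoupling as part of your modeling convention, as the paper implicitly does, rather than presenting it as something you derive.
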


\begin{proof}[Proof sketch]
Beyond the harm horizon, $G_t = 0$ and the deep objective decouples across prefixes into independent KL-penalized linear problems. Standard variational calculus yields the Gibbs form; the sigmoid and KL expressions follow by direct computation. See App. \Cref{app:deep-proofs}.
\end{proof}

The key insight of \eqref{eq:sigmoid-shift} is that the recovery penalty acts as an additive shift of $\beta$ in the \emph{log-odds} of recovery. This enables a clean robustness guarantee.

\begin{definition}[Recoverability]
\label{def:recoverability}
A model $P_\theta$ is \emph{$(Q,\epsilon)$-recoverable at depth $T$} if $P_\theta(y_t \in \cR \mid y_{<t}) \geq \epsilon$ for all $1 \leq t \leq T$ and $Q$-almost all prefixes $y_{<t}$.
\end{definition}

\begin{theorem}[Robustness Guarantee]
\label{thm:robustness}
Suppose $P_{\mathrm{base}}(y_t \in \cR \mid y_{<t}) \geq p_{\min} > 0$ for all $t$ and $Q$-almost all prefixes. Then the minimizer of $\cH_{\mathrm{deep}}$ (beyond the harm horizon) is $(Q, \epsilon^*)$-recoverable at depth $T$, where
\begin{equation}
    \epsilon^* = \sigma\big(\operatorname{logit}(p_{\min}) + \mu\gamma^{T-1}\big).
    \label{eq:epsilon-star}
\end{equation}
The total KL cost of recovery beyond the horizon satisfies
\begin{align}
\sum_{t=k+1}^{T} D_{\mathrm{KL}}^{(t)}(\theta^*) 
&\leq \sum_{t=k+1}^{T} \mu\gamma^{t-1} - \notag\\
&\sum_{t=k+1}^{T} \log\bigl((1 - p_{\min}) + p_{\min} e^{\mu\gamma^{t-1}}\bigr)
\label{eq:total-kl-bound}
\end{align}
\end{theorem}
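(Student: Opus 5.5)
The plan is to apply \Cref{thm:exact-recovery} prefix by prefix and then use two monotonicity facts: the optimal recovery probability is monotone in $p_0$ and in $\beta$, and the KL cost is monotone in $p_0$.

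\textbf{Step 1: the recoverability level.} Fix $t$ with $k<t\le T$ and a $Q$-typical prefix $y_{<t}$. By \Cref{thm:exact-recovery}, the optimal recovery probability is $P^*(\cR\mid y_{<t})=\sigma(\operatorname{logit}(p_0)+\mu\gamma^{t-1})$, where $p_0\ge p_{\min}$. Both $\sigma$ and $\operatorname{logit}$ are increasing. Since $\gamma\in(0,1]$ and $t\le T$, we also have $\mu\gamma^{t-1}\ge\mu\gamma^{T-1}$. Together these give $P^*(\cR\mid y_{<t})\ge\sigma(\operatorname{logit}(p_{\min})+\mu\gamma^{T-1})=\epsilon^*$, which is the recoverability claim in the sense of \Cref{def:recoverability}.

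Positions $t\le k$ are covered only by the qualifier ``beyond the harm horizon'' in the statement. I will state explicitly that the guarantee concerns $t>k$, or equivalently that the claim is about the minimizer restricted to those positions. I will not try to control the interaction with $G_t$ inside the horizon.

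\textbf{Step 2: the KL bound.} By \eqref{eq:kl-cost-main}, the per-prefix cost is $\beta P^*-\log Z$. Bounding $P^*\le 1$ gives $D_{\KL}\le\beta-\log\bigl((1-p_0)+p_0e^\beta\bigr)$.

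Next I need to replace $p_0$ by $p_{\min}$. Since $\beta\ge0$, the map $p_0\mapsto(1-p_0)+p_0e^\beta$ is nondecreasing. So $-\log Z\le-\log\bigl((1-p_{\min})+p_{\min}e^\beta\bigr)$.

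The resulting bound no longer depends on the prefix. I then average over $y_{<t}$ to obtain $D_{\KL}^{(t)}(\theta^*)$ and sum over $t=k+1,\dots,T$ with $\beta=\mu\gamma^{t-1}$, which yields \eqref{eq:total-kl-bound}.

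\textbf{Main obstacle: which distribution the prefixes are drawn from.} $D_{\KL}^{(t)}$ was defined with $y_{<t}$ drawn from the model, whereas \Cref{thm:exact-recovery} and the hypothesis on $p_{\min}$ are stated for $Q$-almost all prefixes. My first approach is to observe that the Step 2 bound is uniform over every prefix satisfying $p_0\ge p_{\min}$, so it survives any averaging over such prefixes. I will then note that, where the model and $Q$ differ, $D_{\KL}^{(t)}$ is to be read as an average over prefixes on which the hypothesis holds (equivalently, under $Q$).

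If that is not acceptable, the fallback is to assume $p_0\ge p_{\min}$ on the support of both distributions. A secondary point is that the sigmoid form requires $p_0\in(0,1)$. This is guaranteed on the lower end by $p_{\min}>0$, and the case $p_0=1$ is trivial, since then the KL cost is $0$ and $P^*=1$.
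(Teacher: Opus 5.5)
Your proposal is correct and follows the paper's proof essentially step for step. You apply \Cref{thm:exact-recovery} per prefix, use monotonicity of $\sigma(\operatorname{logit}(\cdot)+c)$ in $p_0$ together with $\gamma^{t-1}\ge\gamma^{T-1}$ to get $\epsilon^*$, and bound the KL cost via $P^*(\cR)\le 1$ and $\partial Z/\partial p_0=e^{\beta}-1>0$ before summing over $t>k$. Your handling of the prefix-distribution mismatch and of positions $t\le k$ also agrees with the paper, which writes $D_{\KL}^{(t)}$ as an expectation over $y_{<t}\sim Q$ and treats only positions beyond the horizon; you are simply more explicit about both points.
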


\begin{proof}[Proof sketch]
The sigmoid form \eqref{eq:sigmoid-shift} is monotone in $p_0$, so the worst case is $p_0 = p_{\min}$. The worst position is $t = T$ (where $\gamma^{t-1}$ is smallest). The KL bound follows from $P^*(\cR) \leq 1$ and monotonicity of $Z$ in $p_0$. The full proof is in \Cref{app:deep-proofs}.
\end{proof}

\begin{corollary}[Prefilling Attack Resistance]
\label{cor:prefilling}
Under standard alignment, for $t > k$ the model has no additional recovery capability beyond $P_{\mathrm{base}}$. Under deep alignment, recovery probability is at least $\epsilon^* > p_{\min}$ at every position. For $\gamma < 1$, the minimum attack prefix length needed to suppress recovery below $\delta$ is
\begin{equation}
    t_{\mathrm{attack}}(\delta) = 1 + \frac{\log \mu - \log(\operatorname{logit}(\delta) - \operatorname{logit}(p_{\min}))}{\log(1/\gamma)},
\end{equation}
which grows logarithmically in $\mu$. For $\gamma = 1$ and $\mu \geq \operatorname{logit}(\delta) - \operatorname{logit}(p_{\min})$, no finite prefix suffices.
\end{corollary}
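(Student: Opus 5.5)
The corollary has three parts, and I would prove them in turn. Each follows from \Cref{thm:zero-gradient}, \Cref{thm:exact-recovery}, or \Cref{thm:robustness}, combined with the monotonicity of the sigmoid and logit.

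\textbf{Standard alignment ($\mu=0$).} For $t>k$, \Cref{thm:zero-gradient} gives $G_t=0$. Taking $\beta=0$ in \Cref{thm:exact-recovery} then gives $Z=1$, so the Gibbs measure is exactly $P_{\mathrm{base}}(\cdot\mid y_{<t})$ and the KL cost is zero. Equivalently, setting $\mu=0$ in \Cref{thm:deep-equilibrium}(i) gives $D^{(t)}_{\KL}(\theta^*)=0$ at leading order. So recovery probability beyond the horizon equals the base value $p_0$, and there is no extra recovery capability.

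\textbf{Deep alignment lower bound.} Fix any $t\le T$ and a prefix $y_{<t}$ in the support of $Q$. By \eqref{eq:sigmoid-shift}, $P^*(\cR\mid y_{<t})=\sigma(\operatorname{logit}(p_0)+\mu\gamma^{t-1})$. This is increasing in $p_0$, and $p_0\ge p_{\min}$. It is also increasing in $\gamma^{t-1}$, which is at least $\gamma^{T-1}$. Hence $P^*(\cR\mid y_{<t})\ge\epsilon^*$, exactly as in \Cref{thm:robustness}. Since $\mu\gamma^{T-1}>0$ and $\sigma$ is strictly increasing, $\epsilon^*>\sigma(\operatorname{logit}p_{\min})=p_{\min}$.

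\textbf{Attack length.} Here I take the adversary's worst case, $p_0=p_{\min}$. A prefix of length $t-1$ pushes recovery below $\delta$ only if $\sigma(\operatorname{logit}(p_{\min})+\mu\gamma^{t-1})<\delta$. Applying the increasing map $\operatorname{logit}$ to both sides, this is equivalent to
\[
\mu\gamma^{t-1}<\operatorname{logit}(\delta)-\operatorname{logit}(p_{\min}) =: c.
\]
This step implicitly needs $\delta>p_{\min}$, so that $c>0$. If instead $\delta\le p_{\min}$, suppression below $\delta$ is impossible at every $t$, and I would record that as a side case. Taking logarithms with $\gamma<1$, so that $\log\gamma<0$ and dividing flips the inequality, gives
\[
(t-1)\log(1/\gamma)>\log\mu-\log c.
\]
The threshold position is therefore $t_{\mathrm{attack}}(\delta)=1+(\log\mu-\log c)/\log(1/\gamma)$, which is affine in $\log\mu$, i.e. logarithmic growth in $\mu$. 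For $\gamma=1$ the shift equals $\mu$ at every position. If $\mu\ge c$, the inequality $\mu<c$ fails for all $t$, so no finite prefix suffices.

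The main obstacle is interpretive rather than technical. The formula is a real-valued threshold, so the minimal integer prefix length is its ceiling, or the ceiling plus one at equality. I also need to state the standing hypotheses that make it well defined: $\delta>p_{\min}$, and $\mu>c$ so that $t_{\mathrm{attack}}>1$. If $\mu\le c$, even position $1$ is attackable. I would state these conditions explicitly and note that the displayed expression is the continuous threshold. Everything else is monotonicity plus \Cref{thm:exact-recovery}.
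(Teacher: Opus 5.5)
Your proposal is correct and follows essentially the same route as the paper. The paper's proof is exactly your third step: invert $\sigma$ via $\operatorname{logit}$, take logarithms, divide by $\log\gamma<0$, and treat $\gamma=1$ separately, with the first two claims taken directly from \Cref{thm:zero-gradient} and \Cref{thm:robustness}; your added care about $\delta>p_{\min}$, reading the threshold as a necessary condition for worst-case prefixes, and integer rounding tightens the paper's looser phrasing. One caveat, shared with the paper: \eqref{eq:sigmoid-shift} is only derived for $t>k$, so your ``fix any $t\le T$'' step actually establishes the $\epsilon^*$ bound only beyond the harm horizon, since within it the harm tilt of \Cref{prop:within-horizon} can compete with the recovery tilt.
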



\section{Discussion}
\label{sec:discussion}

\subsection{Implications for Alignment Practice}

Our main result is that shallow alignment is not a training failure under this formalization as it is the optimal solution to standard objectives when harm has a short horizon. The gradient characterization (\Cref{thm:gradient}) shows that alignment pressure naturally concentrates where harm is determined. This reframes the problem as that achieving deep alignment requires objective modification, not better optimization or more data.

Deeper alignment requires more total KL divergence from the base model. Our equilibrium characterization shows $D_{\KL}^{(t)} = O(\lambda^2 I_t)$: to maintain alignment at position $t$, we must pay a KL cost proportional to the harm information there. Expanding $I_t$ to later positions (via recovery penalties) thus increases total divergence. This suggests a tradeoff between alignment depth and capability preservation as deep alignment may sacrifice some fluency or knowledge by forcing the model further from its base distribution.

Standard alignment evaluations test whether models refuse harmful prompts at the start of generation. Our analysis suggests this is insufficient. A model may appear aligned under such tests while remaining vulnerable to prefilling attacks. Robust evaluation should measure recovery probability $P(y_t \in \cR \mid y_{<t}^{\text{harmful}})$ at multiple positions, not just initial refusal rates.

\subsection{Connection to Adversarial Robustness}

Our formalization can explain the success of prefilling attacks \citep{andriushchenko2024jailbreaking}. An attacker who supplies tokens covering the harm horizon bypasses all positions with $I_t > 0$. The model then continues from a region where it received no alignment training. This is not a failure of training but a limitation as there was no gradient signal to learn recovery behavior.

An attacker with knowledge of the harm information profile can craft minimal prefilling attacks by supplying just enough tokens to cover the high-$I_t$ region. Our Cauchy-Schwarz bound (\Cref{thm:cs-bound}) suggests that gradient magnitude decays with $\sqrt{I_t}$, so attackers should target the ``cliff edge'' where $I_t$ drops to near-zero.

Our deep alignment objective (\Cref{def:deep-objective}) hypothetically provides a defense by creating $I_t^{\text{eff}} > 0$ at all positions, we ensure the model receives training signal for recovery even from deep in a harmful sequence. This is more robust than input filtering, which can be evaded, or adversarial training, which only addresses seen attacks.

\subsection{Connection to Representation Engineering}

Circuit breakers \citep{zou2024circuitbreaker} and representation engineering \citep{zou2025representation} operate on internal activations rather than output distributions. Our framework applies to output-level alignment (RLHF, DPO) and does not directly characterize these methods. However, we conjecture a connection. If harmful representations activate at specific layers, the ``representation horizon'' may relate to our harm horizon. Investigating this connection is an important direction for future work.

\citet{arditi2024refusal} show that refusal behavior is mediated by a single direction in activation space. This suggests refusal is a ``shallow'' feature that can be manipulated independently of deeper representations. Our recovery penalty encourages refusal-like behavior at all positions, potentially strengthening this direction throughout the model's computation.

\section{Conclusion}
\label{sec:conclusion}

We have provided a gradient-theoretic analysis for why safety alignment in LLMs is shallow under the current popular training methods. The martingale structure of sequence-level harm implies that gradient signal cannot propagate beyond the harm horizon. Standard alignment objectives are therefore incapable of producing deep alignment, regardless of optimization quality. Our deep alignment objective, based on recovery penalties at all positions, offers a principled path to robust safety. We hope this theoretical framework guides future work on alignment techniques that are truly deep.

\section*{Limitations}
\label{sec:limitations}

We now discuss the limitations of our work. These limitations suggest important directions for future work and caveats for interpreting our results.

We analyze gradients with respect to output distributions $P_\theta(y_t | y_{<t})$. Representation-level interventions such as circuit breakers and activation steering operate on internal model states and may achieve alignment depth through mechanisms we do not capture. Formalizing the connection between ``representation depth'' and our ``output depth'' requires a theory of how internal representations map to output distributions.

We treat $\Harm(y)$ as fixed and known. In practice, harm is estimated by a reward model trained on finite preference data, which may have its own harm horizon as an artifact of training, may produce spurious gradient signal through estimation errors, and may be unreliable on adversarial inputs due to distribution shift. Our theory characterizes alignment to the reward model's notion of harm, which may diverge from true harm.

Our analysis conditions on a fixed prompt $x$. Different prompts induce different harm information profiles $I_t(x)$, and a model may be deeply aligned for training prompts but shallowly aligned for novel ones. We provide no aggregation principle for combining $I_t(x)$ across prompts into a single measure of alignment depth.

The deep alignment objective assumes recovery is desirable at all positions, but mid-sequence refusal may sometimes be worse than completing the response and providing context. The feasibility of recovery is formalized by the $p_{\min} > 0$ condition in \Cref{thm:robustness}: when the base model assigns negligible probability to recovery tokens, the required penalty $\mu$ grows as $\log(1/p_{\min})$, which may be impractically large deep into harmful sequences.

Zero gradient beyond the harm horizon characterizes where training pressure is applied, not where behavior changes. The base model may already be safe (or harmful) at later positions independently of alignment. Our robustness guarantee (\Cref{thm:robustness}) partially closes this gap by characterizing recovery probabilities directly, but only for the deep objective. The standard objective remains characterized only through KL.

Our formalism operates at the token level, but harm is a semantic property. The harm horizon in token space may not correspond to the semantic horizon, as a model might commit to harmful content early but realize it through varied token paths. A semantic-level theory would be more fundamental but remains open.

The equilibrium characterization describes the fixed point of optimization, not the path to it. The exact variational solution (\Cref{thm:exact-recovery}) removes the small-$\lambda$ and exponential family assumptions of the perturbative analysis, but assumes unrestricted optimization over per-position distributions, which finite training with early stopping may not achieve.

Transformers have finite capacity and may lack the expressiveness to implement recovery from arbitrary harmful prefixes. The Gibbs solution (\Cref{thm:exact-recovery}) specifies what the optimal conditional distribution looks like, but whether a finite-capacity transformer can approximate it across the diversity of harmful prefixes is an open question.

Finally, our analysis covers single-turn generation. Multi-turn conversations allow recovery across turns. For example, refusing in turn $n+1$ after partially complying in turn $n$ and introduce harm horizons that span multiple exchanges, requiring a theory of inter-turn credit assignment that we do not provide.

\bibliography{custom}

\appendix

\section{Relaxing the Exponential Family Assumption}
\label{app:shared-params}

\Cref{thm:equilibrium} assumes an exponential family parameterization, which provides per-position natural parameters and decouples positions in the first-order optimality condition. Transformers share parameters $\theta$ across all positions. We show that the quadratic KL expansion holds for any smooth parameterization, but that shared parameters introduce cross-position coupling that changes the equilibrium characterization in a theoretically informative way.

\subsection{General Quadratic Expansion}

\begin{lemma}[Fisher--Hessian Identity]
\label{lem:fisher-hessian}
For any smoothly parameterized family $P_\theta(y_t \mid y_{<t})$, the Hessian of the per-position KL divergence at $\theta = \theta_{\mathrm{base}}$ equals the Fisher information:
\begin{equation}
    \frac{\partial^2 D_{\mathrm{KL}}^{(t)}(\theta)}{\partial \theta_i \, \partial \theta_j}\bigg|_{\theta_{\mathrm{base}}} = \bar{F}_t(\theta_{\mathrm{base}})_{ij},
\end{equation}
where $\bar{F}_t(\theta) = \mathbb{E}_{y_{<t}}\!\bigl[F_t(y_{<t};\theta)\bigr]$ is the marginalized Fisher information (\Cref{def:fisher}).
\end{lemma}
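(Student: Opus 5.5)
We compute the second derivative directly. The key structural point is that the prefix distribution used in $D_{\mathrm{KL}}^{(t)}$ is the one induced by the model (or, in the decoupled reading of \Cref{thm:equilibrium}, held fixed at the base model). We first treat the expectation over $y_{<t}$ as taken under a distribution that does not depend on $\theta$, say $P_{\mathrm{base}}$. Then we check that $\theta$-dependence of the prefix measure contributes nothing at $\theta_{\mathrm{base}}$.

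For a fixed prefix, write
\[
d(\theta;y_{<t}) := \sum_{y_t} P_\theta(y_t\mid y_{<t})\bigl[\log P_\theta(y_t\mid y_{<t}) - \log P_{\mathrm{base}}(y_t\mid y_{<t})\bigr].
\]
Differentiating once gives
\[
\partial_i d = \sum_{y_t} \partial_i P_\theta\,\bigl[\log P_\theta - \log P_{\mathrm{base}}\bigr] + \sum_{y_t}\partial_i P_\theta .
\]
The last sum is zero because probabilities sum to one. Differentiating again gives
\[
\partial_{ij} d = \sum_{y_t} \partial_{ij}P_\theta\,\bigl[\log P_\theta-\log P_{\mathrm{base}}\bigr] + \sum_{y_t} \frac{\partial_i P_\theta\,\partial_j P_\theta}{P_\theta}.
\]
At $\theta=\theta_{\mathrm{base}}$ the log-ratio vanishes identically. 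The remaining term equals $\E_{y_t\mid y_{<t}}[s^{(i)}s^{(j)}]$ with $s = \nabla_\theta\log P_\theta(y_t\mid y_{<t})$, which is exactly $F_t(y_{<t};\theta_{\mathrm{base}})_{ij}$ from \Cref{def:fisher}. We then take the expectation over $y_{<t}$, exchanging differentiation and expectation, to obtain $\bar F_t(\theta_{\mathrm{base}})$. The exchange is justified by smoothness together with finiteness of $\cV$ and of the number of prefixes of length $t-1$.

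\textbf{Main obstacle: $\theta$-dependent prefix weights.} If $D_{\mathrm{KL}}^{(t)}$ weights prefixes by $P_\theta(y_{<t})$, write it as
\[
D_{\mathrm{KL}}^{(t)}(\theta)=\sum_{y_{<t}} w_\theta(y_{<t})\, d(\theta;y_{<t}).
\]
The product rule then produces three extra terms: $\partial_{ij}w\cdot d$, $\partial_i w\,\partial_j d$, and $\partial_j w\,\partial_i d$. At $\theta_{\mathrm{base}}$ we have $d=0$, since the KL between identical distributions is zero. We also have $\partial_i d=0$, since the first derivative above reduces to zero once the log-ratio vanishes. So all three extra terms drop out, and the Hessian is $\sum w_{\theta_{\mathrm{base}}}\,F_t = \bar F_t(\theta_{\mathrm{base}})$ with the marginalization taken under the base model, as the statement intends.

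The only care needed is to record that both $d$ and $\nabla d$ vanish at the base point, which makes the choice of prefix measure irrelevant to second order. Positive-definiteness is not needed for this lemma.
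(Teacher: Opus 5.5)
Your proof is correct and follows the same route as the paper. You show that each per-prefix KL has Hessian equal to the conditional Fisher matrix $F_t(y_{<t};\theta_{\mathrm{base}})$ at the base point, then average over prefixes; the paper cites this as the standard KL--Fisher identity, whereas you derive it. Your product-rule check that the $\theta$-dependent prefix weights $P_\theta(y_{<t})$ contribute nothing, because both $d$ and $\nabla_\theta d$ vanish at $\theta_{\mathrm{base}}$, is genuine added rigor that the paper's one-line ``marginalizing over $y_{<t}$ gives the result'' leaves implicit.
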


\begin{proof}
Write $D_{\mathrm{KL}}^{(t)}(\theta) = \mathbb{E}_{y_{<t}}\!\bigl[\mathrm{KL}(P_\theta(\cdot \mid y_{<t}) \,\|\, P_{\mathrm{base}}(\cdot \mid y_{<t}))\bigr]$. At $\theta = \theta_{\mathrm{base}}$, $D_{\mathrm{KL}}^{(t)} = 0$ and $\nabla_\theta D_{\mathrm{KL}}^{(t)} = 0$ (the base model is a stationary point of the KL from itself). For the Hessian, we use the standard identity. For any two distributions in the same smoothly parameterized family,
\begin{align}
    &\frac{\partial^2}{\partial \theta_i \, \partial \theta_j} 
    \mathrm{KL}(P_\theta \| P_{\theta_0})\bigg|_{\theta = \theta_0} \notag \\
    &\quad = \mathbb{E}_{y \sim P_{\theta_0}}\!\left[
    \frac{\partial \log P_\theta(y)}{\partial \theta_i} \cdot 
    \frac{\partial \log P_\theta(y)}{\partial \theta_j}
    \right]_{\theta = \theta_0}
\end{align}
which is the Fisher information matrix at $\theta_0$. This holds for any smooth parameterization; the exponential family is not required. Marginalizing over $y_{<t}$ gives the result.
\end{proof}

It follows that for any smooth parameterization, the per-position KL admits the expansion
\begin{align}
    D_{\mathrm{KL}}^{(t)}(\theta) 
    &= \tfrac{1}{2}(\theta - \theta_{\mathrm{base}})^\top 
    \bar{F}_t(\theta_{\mathrm{base}})
    (\theta - \theta_{\mathrm{base}}) \notag \\
    &\quad + O(\|\theta - \theta_{\mathrm{base}}\|^3).
    \label{eq:kl-expansion}
\end{align}
The exponential family assumption in \Cref{thm:equilibrium} can therefore be replaced by:

\begin{assumption}[Smooth Parameterization]
\label{assump:smooth}
$P_\theta(y_t \mid y_{<t})$ is twice continuously differentiable in $\theta$ for all $y_t, y_{<t}$, and $\bar{F}_t(\theta_{\mathrm{base}})$ is positive definite.
\end{assumption}

This is satisfied by transformers with softmax output layers.

\subsection{Shared-Parameter Coupling}

With shared parameters, the alignment objective is
\[
    \mathcal{H}(\theta) = \lambda \, \mathbb{E}[\mathrm{Harm}(y)] + \sum_{t=1}^T D_{\mathrm{KL}}^{(t)}(\theta),
\]
and the first-order condition is a single equation over all positions:
\begin{equation}
    \lambda \sum_{t=1}^T G_t(\theta^*) + \sum_{t=1}^T \nabla_\theta D_{\mathrm{KL}}^{(t)}(\theta^*) = 0.
    \label{eq:shared-foc}
\end{equation}

Substituting the quadratic expansion \eqref{eq:kl-expansion}:

\begin{proposition}[Shared-Parameter Equilibrium]
\label{prop:shared-equil}
Under \Cref{assump:smooth}, for small $\lambda$ the optimal parameter shift is
\begin{equation}
    \theta^* - \theta_{\mathrm{base}} = -\lambda \, \bar{F}_{\mathrm{tot}}^{-1} \sum_{t=1}^T G_t(\theta_{\mathrm{base}}) + O(\lambda^2),
    \label{eq:shared-shift}
\end{equation}
where $\bar{F}_{\mathrm{tot}} = \sum_{t=1}^T \bar{F}_t(\theta_{\mathrm{base}})$ is the total Fisher information.
\end{proposition}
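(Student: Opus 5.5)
The plan is a standard perturbative argument: linearize the first-order condition \eqref{eq:shared-foc} around $\theta_{\mathrm{base}}$ and solve with the implicit function theorem. Write $\delta := \theta^* - \theta_{\mathrm{base}}$ and define
\[
\Phi(\theta,\lambda) := \lambda \sum_{t=1}^T G_t(\theta) + \sum_{t=1}^T \nabla_\theta D_{\mathrm{KL}}^{(t)}(\theta).
\]
Critical points of $\mathcal{H}$ are exactly the zeros of $\Phi$.

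\textbf{Step 1: the base point and the Jacobian.} At $\lambda = 0$ we have $\Phi(\theta_{\mathrm{base}},0) = 0$, because the base model is a stationary point of each KL term (the proof of \Cref{lem:fisher-hessian}). Also $\partial_\theta \Phi(\theta_{\mathrm{base}},0) = \sum_t \bar{F}_t(\theta_{\mathrm{base}}) = \bar{F}_{\mathrm{tot}}$ by \Cref{lem:fisher-hessian}. Each $\bar{F}_t$ is positive semidefinite, and at least one is positive definite by \Cref{assump:smooth}. Hence $\bar{F}_{\mathrm{tot}}$ is positive definite and in particular invertible.

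\textbf{Step 2: a smooth branch of minimizers.} Apply the implicit function theorem to obtain a $C^1$ branch $\theta^*(\lambda)$ with $\theta^*(0) = \theta_{\mathrm{base}}$, so $\delta = O(\lambda)$. Along this branch the Hessian $\lambda \nabla^2 \E[\Harm] + \bar{F}_{\mathrm{tot}} + O(\|\delta\|)$ remains positive definite for small $\lambda$, so the branch consists of local minimizers. This is the minimizer referred to in the statement.

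\textbf{Step 3: expand and solve.} Expand each term of $\Phi$:
\begin{itemize}
\item $\nabla_\theta D_{\mathrm{KL}}^{(t)}(\theta^*) = \bar{F}_t(\theta_{\mathrm{base}})\,\delta + O(\|\delta\|^2)$, from the quadratic expansion \eqref{eq:kl-expansion}. This step needs the remainder of \eqref{eq:kl-expansion} to be differentiable with gradient $O(\|\delta\|^2)$.
\item $G_t(\theta^*) = G_t(\theta_{\mathrm{base}}) + O(\|\delta\|)$, by continuity of $G_t$.
\end{itemize}
Substituting into \eqref{eq:shared-foc} gives
\[
\lambda \sum_t G_t(\theta_{\mathrm{base}}) + \bar{F}_{\mathrm{tot}}\,\delta = O(\lambda\|\delta\| + \|\delta\|^2) = O(\lambda^2).
\]
Multiplying by $\bar{F}_{\mathrm{tot}}^{-1}$ yields \eqref{eq:shared-shift}.

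\textbf{Main obstacle: regularity.} The hardest part is justifying the regularity that the expansions use. Two points need care:
\begin{itemize}
\item We need $G_t$ to be $C^1$, or at least Lipschitz, near $\theta_{\mathrm{base}}$. Since $G_t$ is built from $h_t$, which itself depends on $\theta$ through the continuation distribution, differentiating it requires smoothness of the whole $P_\theta$.
\item We need the KL remainder to be controlled at the gradient level, not only at the value level.
\end{itemize}
I would handle both by strengthening \Cref{assump:smooth} to $C^3$ on a finite vocabulary and finite $T$. Then every quantity is a finite sum of smooth functions of the logits, so $\Phi$ is $C^2$, the remainder bounds are uniform on a neighborhood, and the implicit function theorem applies directly.
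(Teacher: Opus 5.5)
Your proposal is correct and follows essentially the same route as the paper: substitute the Fisher--Hessian expansion $\nabla_\theta D_{\mathrm{KL}}^{(t)}(\theta) = \bar{F}_t(\theta_{\mathrm{base}})(\theta - \theta_{\mathrm{base}}) + O(\|\theta - \theta_{\mathrm{base}}\|^2)$ into the first-order condition \eqref{eq:shared-foc}, use $\theta^* - \theta_{\mathrm{base}} = O(\lambda)$, and invert $\bar{F}_{\mathrm{tot}}$. Your implicit-function-theorem step and $C^3$ strengthening make rigorous what the paper takes for granted: that $\theta^* - \theta_{\mathrm{base}} = O(\lambda)$ along a branch of local minimizers, that $G_t$ can be evaluated at $\theta_{\mathrm{base}}$ at a cost of $O(\lambda^2)$, and that the KL remainder is quadratic at the gradient level, which the stated $C^2$ assumption alone would not guarantee.
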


\begin{proof}
Expanding $\nabla_\theta D_{\mathrm{KL}}^{(t)}(\theta) = \bar{F}_t(\theta_{\mathrm{base}})(\theta - \theta_{\mathrm{base}}) + O(\|\theta - \theta_{\mathrm{base}}\|^2)$ and substituting into \eqref{eq:shared-foc}:
\[
    \lambda \sum_t G_t + \bar{F}_{\mathrm{tot}}(\theta^* - \theta_{\mathrm{base}}) + O(\|\theta^* - \theta_{\mathrm{base}}\|^2) = 0.
\]
Since $\theta^* = \theta_{\mathrm{base}} + O(\lambda)$, the higher-order terms are $O(\lambda^2)$, and solving to leading order gives \eqref{eq:shared-shift}.
\end{proof}

The per-position KL divergence at equilibrium is then:
\begin{align}
    D_{\mathrm{KL}}^{(t)}(\theta^*) 
    &= \frac{\lambda^2}{2} \left(\sum_s G_s\right)^\top 
    \bar{F}_{\mathrm{tot}}^{-1} \, \bar{F}_t \, 
    \bar{F}_{\mathrm{tot}}^{-1} 
    \left(\sum_s G_s\right) \notag \\
    &\quad + O(\lambda^3).
    \label{eq:coupled-kl}
\end{align}

\subsection{Comparison to the Decoupled Case}

In the exponential family setting, per-position natural parameters decouple the first-order condition into $T$ independent equations, each yielding $D_{\mathrm{KL}}^{(t)} = O(\lambda^2 \mathcal{I}_t)$. The shared-parameter expression \eqref{eq:coupled-kl} differs in two ways.

\paragraph{Cross-position dependence.} Even if $\mathcal{I}_t = 0$ (so $G_t = 0$), the per-position KL $D_{\mathrm{KL}}^{(t)}(\theta^*)$ can be nonzero. The parameter shift $\Delta\theta = \theta^* - \theta_{\mathrm{base}}$ is driven by gradients from all positions; through shared weights, it perturbs the output distribution at every position, including those beyond the harm horizon.

\paragraph{Incidental vs.\ functional change.} We can decompose the per-position KL into two components:
\begin{align}
    D_{\mathrm{KL}}^{(t)}(\theta^*) 
    &= \underbrace{\frac{\lambda^2}{2} 
    G_t^\top \bar{F}_{\mathrm{tot}}^{-1} 
    \bar{F}_t \, \bar{F}_{\mathrm{tot}}^{-1} G_t
    }_{\text{functional: driven by local } \mathcal{I}_t} +
    \notag \\
    & \underbrace{\frac{\lambda^2}{2}\sum_{s \neq t} 
    G_s^\top \bar{F}_{\mathrm{tot}}^{-1} \bar{F}_t \, 
    \bar{F}_{\mathrm{tot}}^{-1} G_s 
    + \text{cross terms}
    }_{\text{incidental: driven by remote } \mathcal{I}_s}
    \label{eq:decomp}
\end{align}
The functional component vanishes when $\mathcal{I}_t = 0$ (by \Cref{thm:gradient}). The incidental component does not; it reflects changes to the position-$t$ output distribution caused by parameter shifts optimizing other positions.

\begin{proposition}[Incidental Change is Safety-Irrelevant]
\label{prop:incidental}
Let $t > k$ where $k$ is the harm horizon (\Cref{thm:horizon-equiv}). The incidental change at position $t$ satisfies:
\begin{align}
    & \mathbb{E}_{y_{<t}}\!\Bigl[
      \mathrm{Cov}_{y_t \mid y_{<t}}\!\bigl(
        h_t(y_{\leq t}),
        \notag \\
    & \qquad
        \log P_{\theta^*}(y_t \mid y_{<t})
        \notag \\
    & \qquad
        - \log P_{\mathrm{base}}(y_t \mid y_{<t})
      \bigr)
    \Bigr]
    \notag \\
    & \quad = 0.
    \label{eq:safety-irrelevant}
\end{align}
That is, the distributional change at position $t$ is uncorrelated with expected harm.
\end{proposition}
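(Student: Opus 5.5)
The plan is to reduce the claim directly to the zero-gradient mechanism of \Cref{thm:zero-gradient}. The covariance in \eqref{eq:safety-irrelevant} is taken conditionally on $y_{<t}$ and involves $h_t(y_{\leq t})$ on one side. If $h_t$ is almost surely constant in $y_t$ given $y_{<t}$, then its conditional covariance with \emph{any} integrable function of $y_t$ vanishes. The specific form of the second argument, the log-ratio $\log P_{\theta^*}(y_t \mid y_{<t}) - \log P_{\mathrm{base}}(y_t \mid y_{<t})$, is then irrelevant. This is why the statement holds despite the cross-position coupling in \eqref{eq:coupled-kl}.

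\textbf{Key steps.}
\begin{enumerate}
\item Since $t > k$, \Cref{thm:horizon-equiv} (direction (ii)$\Rightarrow$(i)) gives $I_t = 0$, and we fix the sampling measure under which $h_t$ and $I_t$ are defined.
\item As in the proof of \Cref{thm:zero-gradient}, $I_t = \E[\Var(h_t \mid y_{<t})] = 0$ forces $\Var_{y_t \mid y_{<t}}(h_t) = 0$ for almost every prefix. Hence $h_t(y_{\leq t}) = h_{t-1}(y_{<t})$ almost surely under the conditional law of $y_t$. More directly, the argument of \Cref{thm:horizon-equiv} shows $h_t = g(y_{\leq k})$, which does not depend on $y_t$.
\item For each such prefix, $\Cov_{y_t\mid y_{<t}}(c, \phi(y_t)) = \E[(c-c)(\phi - \E\phi)] = 0$ for any $\phi$ with finite mean. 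We apply this with $\phi$ the log-ratio and then take $\E_{y_{<t}}$ of zero.
\end{enumerate}

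\textbf{Main obstacle.} The delicate point is which distribution the conditional covariance and ``almost surely'' refer to, and whether the log-ratio is integrable. Two checks are needed.

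First, integrability. If $P_{\theta^*}$ or $P_{\mathrm{base}}$ assigns zero probability to some tokens, the log-ratio could be $\pm\infty$. Under \Cref{assump:smooth} with softmax outputs, both conditionals have full support on the finite vocabulary $\cV$. The log-ratio is therefore a bounded function of $y_t$, so the covariance is well-defined.

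Second, the measure. If the covariance is taken under $P_{\theta^*}$ while $h_t$ and harm information are defined via the base model, we need the null sets to agree. I would argue this from the stronger, measure-free fact that $\Harm(y) = g(y_{\leq k})$ as a function. This makes $h_t = g(y_{\leq k})$ under any continuation law with the same support, and full support of softmax distributions ensures the supports coincide. So the constancy of $h_t$ in $y_t$ holds pointwise, not merely almost surely under one particular measure.

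With these two checks in place, the result follows, and it parallels \Cref{thm:zero-gradient} with the score replaced by the finite parameter-shift log-ratio.
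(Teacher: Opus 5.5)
Your proposal is correct and is essentially the paper's argument: for $t > k$, \Cref{thm:horizon-equiv} gives $h_t(y_{\leq t}) = h_k(y_{\leq k})$, which depends only on $y_{<t}$, so its conditional covariance with any function of $y_t$ (in particular the log-ratio) is zero for every prefix. Your integrability check and your common-support check, which uses full support of softmax conditionals to upgrade the almost-sure identity $\Harm(y) = g(y_{\leq k})$ to a pointwise one, add rigor the paper leaves implicit, but the route is the same.
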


\begin{proof}
Since $t > k$, harm is determined by $y_{\leq k}$ (\Cref{thm:horizon-equiv}), so $h_t(y_{\leq t}) = h_k(y_{\leq k})$ almost surely. This is constant given $y_{<t}$ (since $t > k$ implies $y_{\leq k} \subseteq y_{<t}$). A random variable that is constant given the conditioning has zero covariance with any function of $y_t$.
\end{proof}

\subsection{Refined Equilibrium Characterization}

Combining these observations, we obtain:

\begin{theorem}[Shared-Parameter Equilibrium]
\label{thm:shared-equil}
Under \Cref{assump:smooth}, for small $\lambda$:
\begin{enumerate}
    \item For all positions $t$: \, $D_{\mathrm{KL}}^{(t)}(\theta^*) = O\!\left(\lambda^2 \left\|\bar{F}_{\mathrm{tot}}^{-1}\sum_s G_s\right\|_{\bar{F}_t}^2\right)$.
    \item For positions $t$ beyond the harm horizon $D_{\mathrm{KL}}^{(t)}(\theta^*)$ may be nonzero, but the distributional change is uncorrelated with harm (\Cref{prop:incidental}).
    \item If per-position parameters decouple (as in the exponential family case), this reduces to $D_{\mathrm{KL}}^{(t)}(\theta^*) = O(\lambda^2 \mathcal{I}_t)$, recovering \Cref{thm:equilibrium}.
\end{enumerate}
\end{theorem}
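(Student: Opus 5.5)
The plan is to treat the three parts of \Cref{thm:shared-equil} separately, since each is a short consequence of results already in place.

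\textbf{Part 1.} Start from \Cref{prop:shared-equil}. Write the parameter shift as
\[
\Delta\theta := \theta^* - \theta_{\mathrm{base}} = -\lambda\,\bar{F}_{\mathrm{tot}}^{-1}\sum_s G_s(\theta_{\mathrm{base}}) + O(\lambda^2).
\]
Substitute this into the quadratic expansion \eqref{eq:kl-expansion}, which is justified for any smooth parameterization by \Cref{lem:fisher-hessian} and \Cref{assump:smooth}. This gives
\[
D_{\mathrm{KL}}^{(t)}(\theta^*) = \tfrac{1}{2}\Delta\theta^\top \bar{F}_t \Delta\theta + O(\|\Delta\theta\|^3).
\]
The leading term is exactly $\frac{\lambda^2}{2}\|\bar{F}_{\mathrm{tot}}^{-1}\sum_s G_s\|_{\bar{F}_t}^2$, which is \eqref{eq:coupled-kl}.

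The remainders need some care. The cross term between the $O(\lambda)$ leading shift and the $O(\lambda^2)$ correction is $O(\lambda^3)$, as is the cubic Taylor remainder. To absorb these into the stated $O(\cdot)$ cleanly, I will assume $\sum_s G_s \neq 0$, so that the leading term is genuinely of order $\lambda^2$. In the degenerate case $\sum_s G_s = 0$, I will simply note that the bound is read as $O(\lambda^3)$. Positive definiteness of $\bar{F}_{\mathrm{tot}}$ follows from that of each $\bar{F}_t$, so the inverse exists.

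\textbf{Part 2.} Non-vanishing of the KL beyond the horizon is already exhibited by the incidental decomposition \eqref{eq:decomp}. For $t>k$, \Cref{thm:zero-gradient} gives $G_t=0$, yet the terms with $G_s$ for $s\le k$ remain and generally do not vanish. I only need "may be nonzero", so it suffices to point out that $\bar{F}_t$ and $\sum_s G_s$ are unconstrained by the hypotheses.

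The harm-uncorrelatedness is exactly \Cref{prop:incidental}. I will restate its argument: by \Cref{thm:horizon-equiv}, $h_t = h_k(y_{\le k})$ almost surely, which is $y_{<t}$-measurable. Its conditional covariance with the log-ratio function of $y_t$ therefore vanishes pointwise, and hence also in expectation.

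\textbf{Part 3.} Suppose parameters decouple, meaning $\theta=(\theta_1,\dots,\theta_T)$ with position $t$'s conditional depending only on $\theta_t$. Then $G_s$ has nonzero components only in block $s$, and $\bar{F}_s$ is supported on block $s$. Consequently $\bar{F}_{\mathrm{tot}}$ is block-diagonal with blocks $\bar{F}_s$, and
\[
\bar{F}_t\,\bar{F}_{\mathrm{tot}}^{-1}\textstyle\sum_s G_s = G_t^{(t)}.
\]
The quadratic form therefore collapses to $\frac{\lambda^2}{2}G_t^\top\bar{F}_t^{-1}G_t$, matching the proof of \Cref{thm:equilibrium}. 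From there, \Cref{thm:cs-bound} together with $\lambda_{\min}(\bar{F}_t)>0$ gives $O(\lambda^2 I_t)$.

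The main obstacle is this block-diagonal bookkeeping. Since $\bar{F}_t$ is singular on the full parameter space, "$\bar{F}_t^{-1}$" must be interpreted as the inverse on block $t$. I will state that convention explicitly and check that the cross-block terms vanish because $\bar{F}_t$ annihilates the other blocks.
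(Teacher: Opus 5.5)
Your proposal is correct and follows essentially the same route as the paper. The paper obtains the theorem by substituting the shift of \Cref{prop:shared-equil} into the Fisher--Hessian expansion \eqref{eq:kl-expansion} to get \eqref{eq:coupled-kl}, invokes \Cref{prop:incidental} for part 2, and for part 3 notes that decoupled parameters split the first-order condition into $T$ independent equations as in \Cref{thm:equilibrium}. Your explicit block-diagonal reduction of $\bar{F}_{\mathrm{tot}}$, and your handling of the $O(\lambda^3)$ remainder and of the degenerate case $\sum_s G_s = 0$, are more careful than the paper, which leaves these points implicit.
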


The qualitative conclusion of the main body is preserved but refined. Standard alignment changes late-position behavior incidentally, through the coupling of shared parameters, but these changes carry no safety-relevant information. Deep alignment, namely the distributional change that is functionally related to harm, remains confined to the harm horizon.

The shared-parameter analysis generates a prediction that distinguishes it from the decoupled theory. For aligned models with shared parameters, positions $t$ beyond the harm horizon satisfy $D_{\mathrm{KL}}^{(t)}(\theta^*) > 0$ but $P_{\theta^*}(y_t \in R \mid y_{<t}^{\mathrm{harmful}}) \approx P_{\mathrm{base}}(y_t \in R \mid y_{<t}^{\mathrm{harmful}})$. That is, the aligned model's output distribution differs from the base model at late positions, but this difference does not manifest as increased recovery capability.

The decoupled theory predicts $D_{\mathrm{KL}}^{(t)} \approx 0$ for $t > k$. The shared-parameter theory predicts nonzero KL that is safety-irrelevant. Measuring both $D_{\mathrm{KL}}^{(t)}$ and recovery probability $P(y_t \in R \mid y_{<t}^{\mathrm{harmful}})$ across positions would distinguish the two accounts and, more practically, would test whether nonzero late-position KL should be interpreted as evidence of deep alignment (it should not).

\section{Proofs and Supplementary Results for Deep Alignment}
\label{app:deep-proofs}

This appendix provides full proofs for the results in \Cref{sec:deep} and additional remarks on their scope.

\subsection{Recovery Gradient Bound}

\begin{lemma}[Recovery Gradient Bound]
\label{lem:recovery-bound}
The recovery gradient $\tilde{G}_t(\theta) := -\E_{y_{<t} \sim Q}[\Cov_{y_t \mid y_{<t}}(\mathbf{1}[y_t \in \cR],\, \nabla_\theta \log P_\theta(y_t \mid y_{<t}))]$ satisfies
\begin{equation}
    \|\tilde{G}_t(\theta)\|^2 \leq \E_{y_{<t} \sim Q}\big[\mathrm{tr}(F_t(y_{<t};\theta))\big] \cdot J_t(\theta).
\end{equation}
If $\mathrm{tr}(F_t(y_{<t};\theta)) \leq \bar{F}$ uniformly, then $\|\tilde{G}_t\|^2 \leq \bar{F} \cdot J_t$.
\end{lemma}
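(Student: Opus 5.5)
The plan is to mirror the proof of \Cref{thm:cs-bound}: first a pointwise bound for each fixed prefix, then an average over $Q$. The one change is at the averaging step. The statement has the \emph{product} $\E_Q[\mathrm{tr}\,F_t]\cdot J_t$ rather than $\E_Q[\mathrm{tr}\,F_t \cdot p_t(1-p_t)]$, so a second Cauchy--Schwarz over $Q$ is needed.

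\textbf{Pointwise step.} Fix a prefix $y_{<t}$ and set $\tilde C_t(y_{<t}) := \Cov_{y_t \mid y_{<t}}(\mathbf{1}[y_t\in\cR], \nabla_\theta \log P_\theta(y_t\mid y_{<t}))$. Apply \Cref{lem:pointwise-cs} verbatim, with $h_t$ replaced by the indicator $\mathbf{1}[y_t \in \cR]$; its proof used nothing about $h_t$ beyond its being a square-integrable function of $y_t$. This gives
\[
\|\tilde C_t(y_{<t})\|^2 \le \Var_{y_t\mid y_{<t}}(\mathbf{1}[y_t\in\cR])\cdot \mathrm{tr}(F_t(y_{<t};\theta)).
\]
Under $y_t \sim P_\theta(\cdot\mid y_{<t})$ the indicator is Bernoulli with parameter $p_t$. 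Hence its variance is exactly $p_t(1-p_t)$, which is the integrand of $J_t$ in \Cref{def:recovery-info}. Note that the prefix is drawn from $Q$, but the conditional law of $y_t$ is still $P_\theta$. This is what makes the score mean-zero and the covariance well defined, and it is the same fact used in the proof sketch of \Cref{thm:deep-equilibrium}.

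\textbf{Averaging step.} If we used Jensen as in \Cref{thm:cs-bound}, we would get $\|\tilde G_t\|^2 \le \E_Q[p_t(1-p_t)\,\mathrm{tr}F_t]$, which does not factor. Instead, bound $\|\tilde G_t\| = \|\E_Q[\tilde C_t]\| \le \E_Q\|\tilde C_t\|$ by the triangle inequality for Bochner integrals. Then apply the pointwise bound and Cauchy--Schwarz in $L^2(Q)$:
\[
\E_Q\bigl[\sqrt{p_t(1-p_t)}\,\sqrt{\mathrm{tr}F_t}\bigr] \le \sqrt{\E_Q[p_t(1-p_t)]}\,\sqrt{\E_Q[\mathrm{tr}F_t]}.
\]
Squaring yields the main claim. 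The overall minus sign in the definition of $\tilde G_t$ does not affect the norm.

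\textbf{Uniform case.} If $\mathrm{tr}F_t(y_{<t};\theta) \le \bar F$ for all prefixes, then $\E_Q[\mathrm{tr}F_t] \le \bar F$, which gives $\|\tilde G_t\|^2 \le \bar F J_t$.

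\textbf{Main obstacle.} The only real care needed is in getting the product form via Cauchy--Schwarz over $Q$ rather than Jensen. Beyond that, I would check integrability, namely $\E_Q[\mathrm{tr}F_t] < \infty$. If this fails, the bound holds trivially.
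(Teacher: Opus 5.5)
Your proof is correct. It departs from the paper's at exactly the averaging step you flagged, and that departure matters.

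The pointwise step is the same as the paper's: coordinatewise Cauchy--Schwarz gives $\|\tilde C_t\|^2 \le p_t(1-p_t)\,\mathrm{tr}F_t$. For the average, the paper uses Jensen, $\|\E_Q[\tilde C_t]\|^2 \le \E_Q\|\tilde C_t\|^2$, to reach $\E_Q[p_t(1-p_t)\,\mathrm{tr}F_t]$. It then writes $\E_Q[p_t(1-p_t)\,\mathrm{tr}F_t] \le \E_Q[\mathrm{tr}F_t]\cdot J_t$, citing the uniform Fisher bound. That citation only justifies $\le \bar F J_t$.

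Write $a = p_t(1-p_t)$ and $b = \mathrm{tr}F_t$. The factorization $\E_Q[ab] \le \E_Q[a]\,\E_Q[b]$ holds only when $\Cov_Q(a,b) \le 0$, and it can fail. Take $Q$ uniform on two prefixes with $(a,b) = (1/4, 10)$ and $(0.01, 1)$; these values are realizable with a one-parameter logistic model. Then $\E_Q[ab] = 1.255 > 0.715 = \E_Q[a]\,\E_Q[b]$.

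Your route is the triangle inequality, then the pointwise bound, then Cauchy--Schwarz in $L^2(Q)$. It proves the product form with no extra hypothesis, so it supplies the justification the paper's first display is missing. It is also sharper at the intermediate stage: by Cauchy--Schwarz, $(\E_Q[\sqrt{ab}])^2 \le \min\{\E_Q[ab],\, \E_Q[a]\,\E_Q[b]\}$, so your chain dominates the paper's Jensen bound as well.

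The paper's route buys only brevity in the uniformly bounded case, where $\E_Q[ab] \le \bar F\,\E_Q[a] = \bar F J_t$ is immediate. That uniform form is the one used downstream, so the paper's later conclusions are unaffected. The general product inequality in the lemma, however, rests on your argument rather than the paper's.
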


\begin{proof}
The argument is identical in structure to the proof of \Cref{thm:cs-bound}. Write $s_t^{(j)} = \partial_{\theta_j} \log P_\theta(y_t \mid y_{<t})$. For a fixed prefix $y_{<t}$, by Cauchy--Schwarz applied coordinate-wise:
\begin{align}
&\left|\mathrm{Cov}_{y_t|y_{<t}}\!\left(\mathbf{1}[y_t \in \mathcal{R}],\, s_t^{(j)}\right)\right|^2 \notag\\
&\quad \leq \mathrm{Var}_{y_t|y_{<t}}\!\left(\mathbf{1}[y_t \in \mathcal{R}]\right) \notag\\
&\quad \cdot \mathrm{Var}_{y_t|y_{<t}}\!\left(s_t^{(j)}\right).
\end{align}
Summing over coordinates $j$ and using $\Var(s_t^{(j)}) = \E[(s_t^{(j)})^2]$ (since $\E[s_t^{(j)}] = 0$):
\begin{align}
&\left\|\mathrm{Cov}_{y_t|y_{<t}}\!\left(\mathbf{1}[y_t \in \mathcal{R}],\, \nabla_\theta \log P_\theta\right)\right\|^2 \notag\\
&\quad \leq p_t(1-p_t) \cdot \mathrm{tr}\!\left(F_t(y_{<t};\theta)\right).
\end{align}
where $p_t = P_\theta(y_t \in \cR \mid y_{<t})$. Taking the outer expectation over $y_{<t} \sim Q$ and applying Jensen's inequality ($\|\E[X]\|^2 \leq \E[\|X\|^2]$):
\begin{align}
    \|\tilde{G}_t\|^2 &\leq \E_{y_{<t}\sim Q}\big[p_t(1-p_t) \cdot \mathrm{tr}(F_t(y_{<t};\theta))\big] \\
    &\leq \E_{y_{<t}\sim Q}\big[\mathrm{tr}(F_t(y_{<t};\theta))\big] \cdot J_t(\theta).
\end{align}
The second inequality uses the uniform bound on the Fisher trace.
\end{proof}

\subsection{Proof of \texorpdfstring{\Cref{thm:deep-equilibrium}}{Theorem (Deep Equilibrium)}}

\begin{proof}[Full proof of \Cref{thm:deep-equilibrium}]
The deep alignment objective decomposes as
\begin{align}
\mathcal{H}_{\mathrm{deep}}(\theta) 
&= \lambda\,\mathbb{E}_{P_\theta}[\mathrm{Harm}(y)] \notag\\
&\quad + \mu \sum_{t=1}^T \gamma^{t-1} R_t(\theta) \notag\\
&\quad + \sum_{t=1}^T D_{\mathrm{KL}}^{(t)}(\theta),
\end{align}
where $R_t(\theta) := \mathbb{E}_{y_{<t}\sim Q}[\mathrm{Fail}_t(\theta, y_{<t})]$.

\paragraph{Recovery gradient identification.} Since $Q$ does not depend on $\theta$:
\begin{align}
&\nabla_\theta R_t(\theta) \notag\\
&\quad= -\mathbb{E}_{y_{<t}\sim Q}\!\left[\nabla_\theta P_\theta(y_t \in \mathcal{R} \mid y_{<t})\right] \notag \\
&\quad= -\mathbb{E}_{y_{<t}\sim Q}\!\left[\sum_{r \in \mathcal{R}} P_\theta(r \mid y_{<t})\,\nabla_\theta \log P_\theta(r \mid y_{<t})\right] \notag \\
&\quad= -\mathbb{E}_{y_{<t}\sim Q}\!\Big[\mathbb{E}_{y_t \mid y_{<t}}\!\big[\mathbf{1}[y_t\in\mathcal{R}] \notag\\
&\qquad\qquad \cdot\nabla_\theta\log P_\theta(y_t\mid y_{<t})\big]\Big].
\end{align}
Since $\mathbb{E}_{y_t\mid y_{<t}}[\nabla_\theta\log P_\theta(y_t\mid y_{<t})]=0$, this equals $\tilde{G}_t(\theta)$ from \Cref{def:recovery-info}.

\paragraph{Perturbative expansion.} With decoupled per-position parameters, the first-order condition at position $t$ is
\begin{equation}
    \lambda\, G_t(\theta^*) + \mu\gamma^{t-1}\tilde{G}_t(\theta^*) + \nabla_\theta D_{\KL}^{(t)}(\theta^*) = 0.
    \label{eq:deep-foc-app}
\end{equation}
Using the quadratic KL expansion (\Cref{lem:fisher-hessian}):
\[
    \nabla_\theta D_{\KL}^{(t)}(\theta) = \bar{F}_t(\theta - \theta_{\mathrm{base}}) + O(\|\theta-\theta_{\mathrm{base}}\|^2).
\]
Since $\theta^* = \theta_{\mathrm{base}} + O(\lambda + \mu)$, we evaluate $G_t$ and $\tilde{G}_t$ at $\theta_{\mathrm{base}}$ up to higher-order corrections. Solving \eqref{eq:deep-foc-app} to leading order:
\begin{equation}
    \theta^* - \theta_{\mathrm{base}} = -\bar{F}_t^{-1}\big(\lambda\,G_t + \mu\gamma^{t-1}\tilde{G}_t\big) + O\big((\lambda+\mu)^2\big),
\end{equation}
where $G_t$ and $\tilde{G}_t$ are evaluated at $\theta_{\mathrm{base}}$. Substituting into $D_{\KL}^{(t)}(\theta^*) = \frac{1}{2}(\theta^*-\theta_{\mathrm{base}})^\top\bar{F}_t(\theta^*-\theta_{\mathrm{base}}) + O((\lambda+\mu)^3)$:
\begin{equation}
    D_{\KL}^{(t)}(\theta^*) = \frac{1}{2}\big\|\lambda G_t + \mu\gamma^{t-1}\tilde{G}_t\big\|_{\bar{F}_t^{-1}}^2 + O\big((\lambda+\mu)^3\big).
\end{equation}

\paragraph{Cases.} For $t > k$: $I_t = 0$ implies $G_t = 0$ by \Cref{thm:zero-gradient}, so $D_{\KL}^{(t)} = \frac{\mu^2\gamma^{2(t-1)}}{2}\|\tilde{G}_t\|_{\bar{F}_t^{-1}}^2 + O(\mu^3)$. For $\mu = 0$: the recovery terms vanish, reducing to $\frac{\lambda^2}{2}\|G_t\|_{\bar{F}_t^{-1}}^2 + O(\lambda^3) = O(\lambda^2 I_t)$ by \Cref{lem:recovery-bound,thm:cs-bound}.
\end{proof}

\begin{remark}[Aggregate recovery KL]
\label{rem:aggregate-kl}
For $\gamma < 1$, the total recovery KL beyond the horizon satisfies
\begin{equation}
    \sum_{t>k} D_{\KL}^{(t)}(\theta^*) = O\!\left(\frac{\mu^2}{1-\gamma^2}\right),
\end{equation}
assuming $\|\tilde{G}_t\|_{\bar{F}_t^{-1}}^2$ is bounded uniformly in $t$. This diverges as $\gamma \to 1$, formalizing the depth--capability tradeoff: uniform-depth alignment ($\gamma = 1$) requires unbounded total KL divergence from the base model.
\end{remark}

\begin{remark}[Non-degeneracy of recovery gradient]
\label{rem:nondegen}
The recovery information $J_t > 0$ whenever there exists a prefix $y_{<t}$ in the support of $Q$ with $0 < P_\theta(y_t \in \cR \mid y_{<t}) < 1$. By \Cref{lem:recovery-bound}, this is necessary for $\tilde{G}_t \neq 0$. It is also generically sufficient: $\tilde{G}_t = 0$ despite $J_t > 0$ would require the score function to be orthogonal to the recovery indicator conditional on every prefix, which is a non-generic condition of measure zero in parameter space.
\end{remark}

\subsection{Proof of \texorpdfstring{\Cref{thm:exact-recovery}}{Theorem (Exact Recovery Probability)}}

\begin{proof}[Full proof of \Cref{thm:exact-recovery}]
For $t > k$, the harm gradient $G_t = 0$ and the deep objective's contribution at prefix $y_{<t}$ reduces to the pointwise problem
\begin{equation}
    \min_{P \in \Delta(\cV)}\; \beta\big(1 - P(\cR)\big) + D_{\KL}(P \| P_0),
    \label{eq:pointwise-app}
\end{equation}
where $\beta = \mu\gamma^{t-1}$, $P_0 = P_{\mathrm{base}}(\cdot \mid y_{<t})$, and $\Delta(\cV)$ is the probability simplex.

Introducing a Lagrange multiplier $\nu$ for the constraint $\sum_{y_t} P(y_t) = 1$:
\begin{align}
\frac{\partial}{\partial P(y_t)}\Bigg[
&\sum_{y_t} P(y_t)\log\frac{P(y_t)}{P_0(y_t)} \notag\\
&- \beta\sum_{y_t}P(y_t)\mathbf{1}[y_t\in\mathcal{R}] \notag\\
&- \nu\Big(\sum P - 1\Big)
\Bigg] = 0.
\end{align}
This gives $\log\frac{P^*(y_t)}{P_0(y_t)} + 1 - \beta\,\mathbf{1}[y_t\in\cR] - \nu = 0$, so
\begin{equation}
    P^*(y_t) = P_0(y_t)\exp\big(\beta\,\mathbf{1}[y_t\in\cR] + \nu - 1\big).
\end{equation}
The normalization constraint determines $e^{\nu-1} = 1/Z$ where
\begin{equation}
    Z = \sum_{y_t}P_0(y_t)\exp\big(\beta\,\mathbf{1}[y_t\in\cR]\big) = (1-p_0) + p_0 e^\beta.
\end{equation}
Strict convexity of the KL divergence ensures this is the unique global minimizer.

Recovery probability:
\begin{equation}
    P^*(\cR) = \sum_{r\in\cR}\frac{P_0(r)\,e^\beta}{Z} = \frac{p_0\,e^\beta}{(1-p_0)+p_0\,e^\beta}.
\end{equation}
Dividing numerator and denominator by $p_0\,e^\beta$:
\begin{align}
P^*(\mathcal{R}) 
&= \frac{1}{\dfrac{1-p_0}{p_0}\,e^{-\beta}+1} \notag\\
&= \frac{1}{1+\exp\!\left(-\beta-\log\frac{p_0}{1-p_0}\right)} \notag\\
&= \sigma\!\left(\operatorname{logit}(p_0)+\beta\right).
\end{align}

KL cost:
\begin{align}
D_{\mathrm{KL}}(P^*\|P_0) 
&= \sum_{y_t}P^*(y_t)\log\frac{P^*(y_t)}{P_0(y_t)} \notag\\
&= \sum_{y_t}P^*(y_t)\left[\beta\,\mathbf{1}[y_t\in\mathcal{R}]-\log Z\right] \notag\\
&= \beta\,P^*(\mathcal{R}) - \log Z.
\end{align}
\end{proof}

\begin{remark}[Interpretation]
\label{rem:logit-interpretation}
The recovery penalty acts as an additive shift of $\beta$ in log-odds space. A base probability of $p_0 = 0.01$ is boosted much more in absolute terms than $p_0 = 0.5$ for the same $\beta$, since the sigmoid is steepest near $1/2$. Crucially, the mechanism requires $p_0 > 0$: if the base model assigns zero probability to all recovery tokens for a given prefix, no finite $\beta$ can create recovery.
\end{remark}

\begin{remark}[Importance weighting]
\label{rem:weighting}
The recovery term in $\cH_{\mathrm{deep}}$ samples prefixes from $Q$ while the KL term averages over $P_\theta$. In the non-parametric optimization, the pointwise subproblem at prefix $y_{<t}$ becomes
\begin{align}
\min_{P(\cdot|y_{<t})}\; 
&\frac{q(y_{<t})}{p_\theta(y_{<t})}\,\mu\gamma^{t-1}(1-P(\mathcal{R})) \notag\\
&+ D_{\mathrm{KL}}(P\|P_{\mathrm{base}}(\cdot|y_{<t})),
\end{align}
where the density ratio $q/p_\theta$ acts as an importance weight. The Gibbs solution retains the form of \Cref{thm:exact-recovery} with $\beta$ replaced by $\tilde\beta_t(y_{<t}) := \mu\gamma^{t-1} \cdot q(y_{<t})/p_\theta(y_{<t})$. For harmful prefixes that the aligned model avoids but $Q$ upweights, $\tilde\beta_t \gg \beta_t$, amplifying recovery pressure precisely where it is most needed.
\end{remark}

\subsection{Proof of \texorpdfstring{\Cref{thm:robustness}}{Theorem (Robustness Guarantee)}}

\begin{proof}[Full proof of \Cref{thm:robustness}]

Recovery bound. By \Cref{thm:exact-recovery}, the optimal recovery probability at position $t$ given prefix $y_{<t}$ is $P^*(\cR\mid y_{<t}) = \sigma(\operatorname{logit}(p_0(y_{<t}))+\mu\gamma^{t-1})$. Since $\sigma(\operatorname{logit}(\cdot)+c)$ is monotonically increasing in its argument for any $c > 0$, and $p_0(y_{<t}) \geq p_{\min}$ by assumption:
\begin{align}
P^*(\mathcal{R}\mid y_{<t}) 
&\geq \sigma\!\left(\operatorname{logit}(p_{\min})+\mu\gamma^{t-1}\right) \notag\\
&\geq \sigma\!\left(\operatorname{logit}(p_{\min})+\mu\gamma^{T-1}\right) \notag\\
&= \epsilon^*,
\end{align}
where the second inequality uses $\gamma^{t-1} \geq \gamma^{T-1}$ for $t \leq T$ and $\gamma \in (0,1]$.

KL bound. From \eqref{eq:kl-cost-main}, writing $\beta_t = \mu\gamma^{t-1}$:
\begin{align}
    D_{\KL}^{(t)} &= \E_{y_{<t}\sim Q}\big[\beta_t P^*(\cR\mid y_{<t})-\log Z_t(y_{<t})\big] \\
    &\leq \beta_t \cdot 1 - \log\big((1{-}p_{\min})+p_{\min}\,e^{\beta_t}\big),
\end{align}
where we used $P^*(\cR) \leq 1$ and $Z_t(y_{<t}) = (1{-}p_0)+p_0 e^{\beta_t} \geq (1{-}p_{\min})+p_{\min}e^{\beta_t}$ (since $Z$ is increasing in $p_0$ when $e^{\beta_t} > 1$, as $\partial Z/\partial p_0 = e^{\beta_t} - 1 > 0$). Summing over $t > k$ yields \eqref{eq:total-kl-bound}.
\end{proof}

\begin{proof}[Proof of \Cref{cor:prefilling}]
For $\gamma < 1$, the recovery probability at position $t$ satisfies $\sigma(\operatorname{logit}(p_{\min})+\mu\gamma^{t-1}) \geq \delta$ if and only if $\mu\gamma^{t-1} \geq \operatorname{logit}(\delta)-\operatorname{logit}(p_{\min})$. Taking logarithms: $(t-1)\log\gamma \leq \log(\operatorname{logit}(\delta)-\operatorname{logit}(p_{\min})) - \log\mu$. Since $\log\gamma < 0$, dividing and rearranging:
\begin{equation}
    t \leq 1 + \frac{\log\mu - \log(\operatorname{logit}(\delta)-\operatorname{logit}(p_{\min}))}{\log(1/\gamma)}.
\end{equation}
For $t$ exceeding this threshold, recovery drops below $\delta$. For $\gamma = 1$, the bound $\sigma(\operatorname{logit}(p_{\min})+\mu) \geq \delta$ holds at all positions whenever $\mu \geq \operatorname{logit}(\delta)-\operatorname{logit}(p_{\min})$.
\end{proof}

\subsection{Within-Horizon Recovery}

\begin{proposition}[Within-Horizon Solution]
\label{prop:within-horizon}
For $t \leq k$, the non-parametric solution to $\cH_{\mathrm{deep}}$ takes the form
\begin{align}
P^*(y_t\mid y_{<t}) 
&\propto P_{\mathrm{base}}(y_t\mid y_{<t}) \notag\\
&\cdot\exp\!\Big(-\lambda\,h_t(y_{\leq t}) \notag\\
&\qquad + \mu\gamma^{t-1}\mathbf{1}[y_t\in\mathcal{R}]\Big).
\end{align}
The harm tilt $-\lambda\,h_t$ reduces probability of tokens leading to high expected harm, while the recovery tilt $+\mu\gamma^{t-1}\mathbf{1}[\cdot\in\cR]$ boosts recovery tokens. When these align (recovery tokens have low $h_t$), the effects reinforce; when they conflict (a recovery token would increase expected harm, e.g. by producing an incoherent refusal), the two terms compete, with the tradeoff governed by $\lambda/\mu$.
\end{proposition}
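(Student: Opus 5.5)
The plan is to repeat the variational argument of \Cref{thm:exact-recovery} at a fixed prefix $y_{<t}$ with $t \leq k$, now keeping the harm term. I will work in the non-parametric setting, optimizing directly over the conditional distributions $P(\cdot \mid y_{<t})$. The first step is to isolate the part of $\cH_{\mathrm{deep}}$ that depends on the single conditional $P(\cdot\mid y_{<t})$, holding the other conditionals fixed.

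\textbf{Step 1: the harm term.} Using the tower property (\Cref{prop:martingale}), I will write
\[
\E[\Harm(y)] = \E_{y_{<t}}\Big[\sum_{y_t} P(y_t\mid y_{<t})\, h_t(y_{\leq t})\Big].
\]
Here $h_t$ is computed under the continuation distribution at positions $>t$, so it does not depend on the conditional at position $t$. This is the nonparametric counterpart of \Cref{thm:gradient}: the first variation of expected harm with respect to $P(y_t\mid y_{<t})$ is $h_t(y_{\leq t})$, up to a constant that the normalization absorbs.

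\textbf{Step 2: the remaining terms.} The recovery term contributes $\mu\gamma^{t-1}(1-P(\cR\mid y_{<t}))$, which is linear in $P$. The KL chain rule splits $D_{\KL}(P\|P_{\mathrm{base}})$ into per-position expected conditional KLs, so this prefix contributes $D_{\KL}(P(\cdot\mid y_{<t})\|P_{\mathrm{base}}(\cdot\mid y_{<t}))$.

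Two weighting issues arise and I will handle them as in \Cref{rem:weighting}.
\begin{itemize}
\item[(a)] The recovery term samples prefixes from $Q$ while the rest is weighted by $p(y_{<t})$. I will absorb the ratio $q/p$ into $\beta$, or equivalently take $Q$ to coincide with the on-policy prefix law, so that the proposition's stated constant $\mu\gamma^{t-1}$ holds.
\item[(b)] Changing $P(\cdot\mid y_{<t})$ also changes the weights on later-position KL terms. I will treat these as fixed by a coordinatewise, backward-induction argument: optimize the later conditionals first, fold their optimal values into $h_t$, and treat $h_t$ as evaluated at the optimal continuation, which is the fixed-point reading of the statement.
\end{itemize}

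\textbf{Step 3: solve the pointwise problem.} After dividing by $p(y_{<t})$, the problem at this prefix is
\[
\min_P\; \lambda\sum_{y_t} P(y_t)h_t(y_{\leq t}) - \beta P(\cR) + D_{\KL}(P\|P_0), \qquad \beta = \mu\gamma^{t-1}.
\]
This is linear plus strictly convex. The same Lagrange-multiplier computation used for \eqref{eq:pointwise-app} gives $\log(P^*/P_0) = -\lambda h_t + \beta\mathbf{1}[y_t\in\cR] + \text{const}$, which is the claimed Gibbs form. Uniqueness follows from strict convexity. The qualitative reinforce/compete remarks are then read off the exponent: the log-odds between any two tokens shift by $-\lambda\,\delta h_t + \mu\gamma^{t-1}\,\delta\mathbf{1}_\cR$, so the tradeoff is governed by $\lambda/\mu$.

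\textbf{Main obstacle.} The hard part is Step 2(b): justifying that $h_t$ may be treated as fixed when $P(\cdot\mid y_{<t})$ varies, since later conditionals and the KL-to-go depend on the whole policy. My first attempt will be the standard KL-regularized dynamic-programming argument, in which the soft value function plays the role of $\lambda h_t$. If that fails to match the statement exactly, I will state the result under the decoupled-conditional (fixed-continuation) assumption already used in the paper's equilibrium analysis.
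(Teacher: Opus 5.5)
Your Step 3 is the paper's entire proof. The paper asserts that the per-prefix Lagrangian carries the two linear tilts $-\lambda h_t(y_{\leq t})$ and $+\mu\gamma^{t-1}\mathbf{1}[y_t\in\cR]$, then invokes the Gibbs solution of a KL-penalized linear problem. It never raises your points 2(a) and 2(b); it silently adopts the simplifications you list as fallbacks. On the main line you agree with the paper, and you are more careful than it is.

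Your worry in 2(b) is well founded, but the fallback you propose (freeze the later conditionals and evaluate $h_t$ at the optimal continuation) does not remove it.
\begin{itemize}
\item At the joint optimum, each conditional is optimal with the others frozen.
\item Even with the later conditionals frozen, the later KL terms are still weighted by prefix probabilities that are linear in $P(y_t\mid y_{<t})$.
\item They therefore contribute $p(y_{<t})\sum_{y_t}P(y_t\mid y_{<t})\,K_t(y_{\leq t})$, where $K_t(y_{\leq t}) := \sum_{s>t}\E\big[D_{\KL}(P(\cdot\mid y_{<s})\,\|\,P_{\mathrm{base}}(\cdot\mid y_{<s}))\mid y_{\leq t}\big]$ is the expected KL-to-go.
\end{itemize}
The coordinatewise problem is strictly convex, and its exact solution is
\[
P^*(y_t\mid y_{<t}) \propto P_{\mathrm{base}}(y_t\mid y_{<t})\exp\Big(-\lambda h_t(y_{\leq t}) - K_t(y_{\leq t}) + \tfrac{q(y_{<t})}{p(y_{<t})}\,\mu\gamma^{t-1}\mathbf{1}[y_t\in\cR]\Big).
\]
So $\lambda h_t + K_t$ is exactly the soft value you anticipate, and $K_t$ generally depends on $y_t$.

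A concrete check: at $\mu=0$ the exact minimizer is $P^*(y)\propto P_{\mathrm{base}}(y)e^{-\lambda\Harm(y)}$. Its conditionals are proportional to $P_{\mathrm{base}}(y_t\mid y_{<t})\,\E_{\mathrm{base}}[e^{-\lambda\Harm}\mid y_{\leq t}]$. For $t<k$ this differs from the stated form at order $\lambda^2$, by a term proportional to $\Var_{\mathrm{base}}(\Harm\mid y_{\leq t})$.

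The hypothesis you must actually state is therefore one of the following:
\begin{itemize}
\item The prefix weights on the later-position KL terms are held fixed, independent of $P(\cdot\mid y_{<t})$, so that $K_t$ drops out. This is what the paper's decoupled treatment implicitly uses.
\item Alternatively, the proposition is read only to leading order in $(\lambda,\mu)$.
\end{itemize}

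In 2(a), absorbing $q/p$ into $\beta$ and taking $Q$ on-policy are not equivalent. Only $q=p$ yields the stated constant $\mu\gamma^{t-1}$; absorbing the ratio proves a modified statement. With the hypotheses phrased this way (fixed later KL weights and $q=p$), your Step 3 is a complete proof of the proposition as written.
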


\begin{proof}
For $t \leq k$, the per-prefix subproblem includes both the harm and recovery terms. The Lagrangian for the variational problem has two tilts: $-\lambda h_t(y_{\leq t})$ from the harm objective and $+\mu\gamma^{t-1}\mathbf{1}[y_t \in \cR]$ from the recovery penalty. The KL-penalized optimization with a linear objective $\sum_i c_i P(y_t = i)$ has the well-known Gibbs solution $P^*(y_t) \propto P_0(y_t)\exp(c_{y_t})$. Combining both linear terms gives the result. Note that $h_t(y_{\leq t})$ depends on $y_t$ (unlike the beyond-horizon case), so this is not a simple indicator tilt.
\end{proof}

\subsection{The $p_{\min} > 0$ Assumption}

\begin{remark}[Softmax models]
\label{rem:pmin}
Condition $P_{\mathrm{base}}(y_t \in \cR \mid y_{<t}) \geq p_{\min} > 0$ is automatically satisfied by any model with a softmax output layer and finite logits, since softmax assigns strictly positive probability to every token. However, $p_{\min}$ may be exponentially small for prefixes deep into harmful content. If $p_{\min} = e^{-L}$ for some $L > 0$, achieving $\epsilon^* \geq 1/2$ requires $\mu\gamma^{T-1} \geq L$: the penalty must overcome the base model's log-odds against recovery. For a base model that assigns $p_{\min} = 10^{-6}$ to recovery tokens after a long harmful prefix, this means $\mu\gamma^{T-1} \geq 6\ln 10 \approx 13.8$, which is achievable but requires a substantial penalty that will also incur substantial KL cost.
\end{remark}

\end{document}